\documentclass[twoside]{article}

\usepackage[accepted]{aistats2026}
\usepackage[round]{natbib}

\usepackage{amsmath,amssymb}     
\usepackage{amsthm}

\theoremstyle{definition}
\newtheorem{theorem}{Theorem}[section]

\newtheorem{definition}{Definition}[section]

\newtheorem{assumption}{Assumption}[section]
\newtheorem{remark}{Remark}[section]

\usepackage[ruled,vlined]{algorithm2e}
\SetAlgoNlRelativeSize{-1}
\SetAlgoInsideSkip{0pt}
\SetKwInput{Input}{Input}
\SetKwInput{Output}{Output}

\usepackage{graphicx}
\usepackage{subcaption}
\usepackage{mdframed}
\usepackage{makecell}

\usepackage{booktabs}
\usepackage{multirow}

\usepackage{subcaption}

\usepackage{enumitem}

\begin{document}

% If your paper is accepted and the title of your paper is very long,
% the style will print as headings an error message. Use the following
% command to supply a shorter title of your paper so that it can be
% used as headings.
%
\runningtitle{Dashed Line Defense: Plug-And-Play Defense Against Adaptive Score-Based Query Attacks}

% If your paper is accepted and the number of authors is large, the
% style will print as headings an error message. Use the following
% command to supply a shorter version of the author names so that
% they can be used as headings (for example, use only the surnames)
%
%\runningauthor{Surname 1, Surname 2, Surname 3, ...., Surname n}
\runningauthor{Yanzhang Fu, Zizheng Guo, Jizhou Luo}

\twocolumn[

\aistatstitle{Dashed Line Defense: Plug-And-Play Defense\\Against Adaptive Score-Based Query Attacks}

\aistatsauthor{ Yanzhang Fu\textsuperscript{*} \\ \texttt{research@fyzdalao.xyz} \And Zizheng Guo\textsuperscript{*} \\ \texttt{zguo.research@gmail.com} \And Jizhou Luo\textsuperscript{*\textdagger} \\ \texttt{luojizhou@hit.edu.cn} }

\aistatsaddress{
  \textsuperscript{*} Harbin Institute of Technology \qquad 
  \textsuperscript{\textdagger} \textit{Corresponding Author}
}

]

\begin{abstract}
  % The Abstract paragraph should be indented 0.25 inch (1.5 picas) on
  % both left and right-hand margins. Use 10~point type, with a vertical
  % spacing of 11~points. The \textbf{Abstract} heading must be centered,
  % bold, and in point size 12. Two line spaces precede the
  % Abstract. The Abstract must be limited to one paragraph.
  Score-based query attacks pose a serious threat to deep learning models by crafting adversarial examples (AEs) using only black-box access to model output scores, iteratively optimizing inputs based on observed loss values. While recent runtime defenses attempt to disrupt this process via output perturbation, most either require access to model parameters or fail when attackers adapt their tactics. In this paper, we first reveal that even the state-of-the-art plug-and-play defense can be bypassed by adaptive attacks, exposing a critical limitation of existing runtime defenses. We then propose Dashed Line Defense (DLD), a plug-and-play post-processing method specifically designed to withstand adaptive query strategies. By introducing ambiguity in how the observed loss reflects the true adversarial strength of candidate examples, DLD prevents attackers from reliably analyzing and adapting their queries, effectively disrupting the AE generation process. We provide theoretical guarantees of DLD’s defense capability and validate its effectiveness through experiments on ImageNet, demonstrating that DLD consistently outperforms prior defenses—even under worst-case adaptive attacks—while preserving the model’s predicted labels.
\end{abstract}

\section{Introduction}

Deep neural networks (DNNs) have achieved remarkable success across various domains, yet they remain vulnerable to adversarial examples (AEs) — inputs that differ only slightly from legitimate samples but cause the model to make incorrect predictions~\citep{goodfellow2015explaining}. This vulnerability raises serious security concerns, especially given the rapid advancement of black-box score-based query attacks (SQAs). These attacks leverage zeroth-order optimization (ZOO) to craft AEs by querying only the model’s output scores, without requiring access to internal parameters, making them highly practical and hard to defend against.

To mitigate these threats, researchers have explored two main categories of defenses against SQAs: training-time and runtime defenses. Training-time defenses, known as adversarial training (AT)~\citep{madry2018towards}, enhance robustness by incorporating AEs during model training but are costly and often impractical when data or model access is limited. Runtime defenses instead attempt to disrupt the attacker’s optimization by perturbing outputs during prediction~\citep{qin2021random, RND}, avoiding retraining but often degrading accuracy or requiring model modifications.
Among the recent runtime defenses, the state-of-the-art Adversarial Attack on Attackers (AAA)~\citep{AAA} stands out for its plug-and-play design, offering strong protection claims without altering model predictions or requiring access to internal parameters.

However, as we reveal in Section~\ref{sec:AAAAA}, AAA remains vulnerable to adaptive attacks, exposing a fundamental limitation of current runtime defenses against black-box adversaries: they often underestimate attacker adaptivity by assuming static or uninformed adversaries, while the true effectiveness of any defense depends critically on its ability to withstand adaptive adversaries who design attack strategies to target the defense itself.  
This observation echoes findings from the white-box setting~\citep{tramer2020adaptive}, where defenses that ignore attacker adaptivity were shown to provide only illusory robustness.

\begin{figure}
  \centering
  \includegraphics[width=0.31 \textwidth]{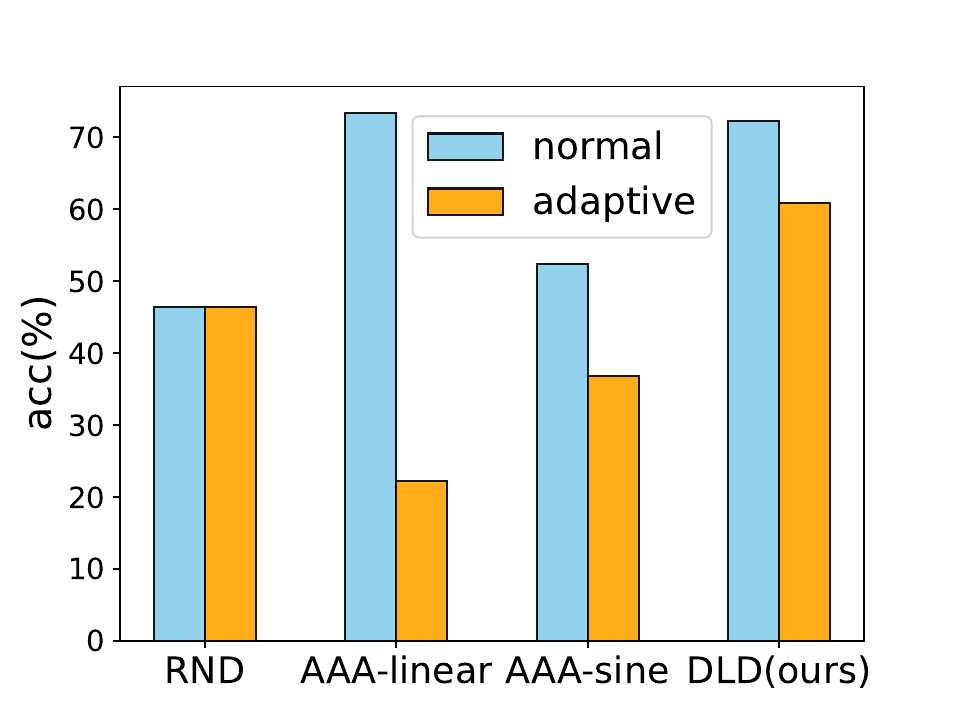}
  \caption{Under-attack accuracy of four defenses, with blue and orange bars indicating normal and adaptive attacks, respectively. Note that RND reduces accuracy on non-adversarial samples.}
  \label{fig:intro}
\end{figure}

Building on the observed weakness of the state-of-the-art defense, we propose Dashed Line Defense (DLD), a plug-and-play post-processing method that preserves predicted labels and remains effective against adaptive attacks. DLD employs a non-continuous mapping between the attacker-observed loss and the true loss (illustrated later in Figure~\ref{fig:dld}), introducing ambiguity in how the observed loss reflects the adversarial effectiveness of AE candidates. This ambiguity disrupts smooth optimization paths and greatly increases difficulty for attackers to analyze and bypass the defense.

We provide a theoretical analysis characterizing DLD’s defense strength against both standard and adaptive attacks. Experiments on the ImageNet dataset show that AAA, despite the state-of-the-art runtime defense, suffers severe degradation under adaptive attacks (see Figure~\ref{fig:intro} and Section~\ref{sec:results}), whereas DLD maintains high accuracy even under worst-case adaptive scenarios. Furthermore, DLD consistently outperforms the prior plug-and-play approach RND~\citep{RND}, achieving higher accuracy while preserving model predictions. These results highlight the necessity of designing defenses that account for attacker adaptivity, which is DLD’s central motivation.

Our main contributions are threefold:
\begin{itemize}
    \item We reveal a fundamental limitation of the state-of-the-art runtime defense (AAA) against adaptive SQA tactics, exposing a concrete gap between claimed and actual protection under adaptivity.
    \item We propose Dashed Line Defense (DLD), a novel plug-and-play post-processing method that confuses the attack optimization process and is difficult to bypass.
    \item We provide a formal analysis that characterizes DLD’s defense strength, and through experiments on ImageNet, show that DLD outperforms prior plug-and-play defenses (including AAA and RND) even under worst-case adaptive attacks.
\end{itemize}

\section{Related Work}

\textbf{Generating Adversarial Examples} began with pioneering works~\citep{szegedy2014intriguing, goodfellow2015explaining}, which showed DNNs are vulnerable to small perturbations. In white-box settings, where model gradients are accessible, attacks like C\&W~\citep{carlini2017towards} generate AEs using gradient-based optimization. In black-box settings, early methods relied on substitute models to approximate gradients~\citep{papernot2017practical, li2020towards, xie2019improving}.

In 2017, \cite{chen2017zoo} proposed a ZOO-based attack that outperformed substitute model-based methods in black-box settings. This breakthrough sparked extensive subsequent research employing either gradient estimation-based ZOO~\citep{ilyas2018black, bhagoji2018practical, ilyas2019prior, al2020sign} or random search-based ZOO~\citep{guo2019simple, alzantot2019genattack, Square}.
However, these methods mainly focus on efficiently generating the next candidate during optimization, while overlooking challenging scenarios like non-convex landscapes or stochastic noise, leading to suboptimal performance against defenses.

\textbf{Defenses to Adversarial Examples} can be categorized based on how they interact with the model: 
(1) modifying the training process, 
(2) altering the model’s internal structure, or 
(3) adding an external plug-and-play module.

The first approach, AT~\citep{madry2018towards, tramer2018ensemble}, augments training data with AEs and retrains the model. 
Although effective, it reduces clean accuracy, incurs high computational cost~\citep{shafahi2019adversarial}, and often generalizes poorly to unseen attacks~\citep{tramer2020adaptive}. 
Its effectiveness remains largely empirical \citep{schmidt2018adversarially}, and recent theoretical work on data debugging \citep{guo2024datadebugging} suggests that certifying the effects of training-time data manipulations like AT may be fundamentally difficult.

The second approach enhances robustness by modifying the model’s architecture or computation.
Examples include weight noise injection~\citep{qin2021random}, feature denoising~\citep{xie2019feature}, and randomized smoothing~\citep{cohen2019certified}.
These methods can improve robustness against standard attacks but often rely on gradient obfuscation and fail under adaptive attacks~\citep{athalye2018obfuscated}.

The third approach adds plug-and-play modules without modifying the model. 
Pre-processing defenses perturb the input before prediction~\citep{byun2022effectiveness, RND}, but the resulting change to output scores is indirect and unpredictable, potentially altering the model’s prediction. 
Post-processing defenses, such as AAA~\citep{AAA}, modify the output scores after prediction, ensuring the predicted label remains unchanged. 
However, as shown in this paper, even such state-of-the-art post-processing defenses can be bypassed by adaptive attacks.

While there also exist methods that detect attacks via historical queries \citep{chen2020stateful, li2024query, park2025mind},
their focus is somewhat orthogonal to this work. Such detection inherently limits the number of queries an attacker can make, forcing attackers to prioritize query efficiency over time efficiency.

\textbf{Adaptive Attacks on Defenses} have evolved in response to defense proposals.
Many defenses relying on obfuscated gradients were soon broken by adaptive attacks~\citep{athalye2018obfuscated}.
\cite{tramer2020adaptive} outlined general principles for designing adaptive attacks, and \cite{yao2021automated} developed a tool to automate the process of crafting such attacks.
These successful adaptive attacks primarily target white-box settings with full model access. While black-box attacks can be incorporated into adaptive strategies, query efficiency is often neglected. To our knowledge, adaptive attacks in black-box settings remain largely underexplored.

\section{Preliminaries}

\subsection{Score-Based Black-Box Attack}

We consider a black-box classifier $f: \mathcal{X} \rightarrow \mathcal{Y}$, where $f_y(\mathbf{x})$ denotes the output score for class $y$ given input $\mathbf{x}$, and the predicted label is $f(\mathbf{x}) = \mathop{\arg \max}_{y} f_y(\mathbf{x})$. 
Only the output scores $f_y(\mathbf{x})$ for each class $y$ can be accessed through queries, while the model’s internal parameters and gradients are inaccessible. 
Let $\mathcal{N}(\mathbf{x}_0,\epsilon) = \{\mathbf{x} \mid \Vert \mathbf{x} - \mathbf{x}_0 \Vert \le \epsilon \}$ denote the neighborhood around a clean input $\mathbf{x}_0$. An adversarial example is formally defined as follows:

\begin{definition}
Let $f: \mathcal{X} \rightarrow \mathcal{Y}$ be a classifier, $\mathbf{x}_0 \in \mathcal{X}$ a clean input, and $\epsilon_\text{n}$ a perturbation budget. An input $\mathbf{x} \in \mathcal{X}$ is a $(f, \mathbf{x}_0, \epsilon_\text{n})$-adversarial example (AE) if
\[
\mathbf{x} \in \mathcal{N}(\mathbf{x}_0,\epsilon_\text{n}) \quad \text{and} \quad f(\mathbf{x}) \ne f(\mathbf{x}_0).
\]
\end{definition}

Given a correctly classified input $\mathbf{x}_0 \in \mathcal{X}$ with ground-truth label $y \in \mathcal{Y}$, a query budget $n$, and a perturbation budget $\epsilon_\text{n}$, the goal of a black-box score-based query attack (SQA) is to craft a $(f, \mathbf{x}_0, \epsilon_\text{n})$-AE using no more than $n$ queries to $f$.

\interfootnotelinepenalty=10000

Since the model is black-box and only the output scores are accessible, the attack is typically modeled as a ZOO process that minimizes the margin loss:
\begin{equation}
\label{eq:margin_loss_1}
\mathcal{L}(\mathbf{x},y) = f_y(\mathbf{x}) - \max_{t\neq y}f_t(\mathbf{x}),
\end{equation}
where $y \in \mathcal{Y}$ is a reference label that defines the margin. It is typically chosen as the predicted label of the initial input $\mathbf{x}_0$. A negative margin indicates that the model no longer favors label $y$, and thus serves as a criterion for attack success
\footnote{
This loss is designed for untargeted attacks. In targeted attacks, the label $t$ is a fixed target class chosen by the attacker rather than being arbitrary. This paper focuses on untargeted settings.
}.
If $y$ is unspecified, we use the shorthand notation:
\begin{equation}
\mathcal{L}(\mathbf{x}) = f_{f(\mathbf{x})}(\mathbf{x}) - \max_{t\neq f(\mathbf{x})}f_t(\mathbf{x}).
\end{equation}

Each iteration, the attacker uses a generator $G_{(\mathbf{x}_0,\epsilon_\text{n})}(\cdot): \mathcal{X} \rightarrow \mathcal{N}(\mathbf{x}_0,\epsilon_\text{n})$ to produce a new candidate $\mathbf{x}^{\text{new}}$.
It may implement heuristic strategies to explore the neighborhood of $\mathbf{x}_0$ within the allowed perturbation budget.
Then, the attacker compares $\mathbf{x}^{\text{new}}$ with current best sample $\mathbf{x}^k$ and performs the update:
\begin{equation}
\mathbf{x}^{k+1} = \left\{
\begin{aligned}
\mathbf{x}^{\text{new}}, \quad & \mathcal{L}(\mathbf{x}^{\text{new}},y) < \mathcal{L}(\mathbf{x}^k,y) \\
\mathbf{x}^k, \quad & \text{otherwise}
\end{aligned}
\right.
\label{min-decide}
\end{equation}

This iterative procedure is summarized in Algorithm~\ref{alg:standard}. The attack is deemed successful if it finds some $\mathbf{x}$ such that $\mathcal{L}(\mathbf{x}, y) < 0$; otherwise, it fails when the query budget $n$ is exhausted without success.

Note that generating $\mathbf{x}^{\text{new}}$ using $G_{(\mathbf{x}_0,\epsilon_\text{n})}$ may involve multiple queries to the model, so the number of iterations can be fewer than the query budget $n$.
We write $G = G_{(\mathbf{x}_0,\epsilon_\text{n})}$ for brevity whenever the context permits.

\begin{algorithm}[tb]
\caption{standard-SQA$(\mathbf{x}_0,G_{(\mathbf{x}_0,\epsilon_\text{n})})$}
\label{alg:standard}
\Input{clean sample $\mathbf{x}_0$, example generator $G_{(\mathbf{x}_0,\epsilon_\text{n})}$}
\Output{AE candidate $\mathbf{x}$}

$y_0 \gets f(\mathbf{x}_0)$, $\mathbf{x} \gets \mathbf{x}_0$\;

\While{$\mathcal{L}(\mathbf{x},y_0)>0$ \textbf{and} not out of query budget}{
    $\mathbf{x}_{\text{new}} \gets G_{(\mathbf{x}_0,\epsilon_\text{n})}(\mathbf{x})$\;
    
    \If{$\mathcal{L}(\mathbf{x}_{\text{new}},y_0) < \mathcal{L}(\mathbf{x},y_0)$}{
        $\mathbf{x} \gets \mathbf{x}_{\text{new}}$\;
    }
}

\KwRet{$\mathbf{x}$}
\end{algorithm}

\subsection{Plug-and-Play Defense}

Under black-box constraints, defenders can adopt plug-and-play defenses that perturb model outputs to mislead attackers, causing observed loss values to differ from the true ones. These defenses fall into two categories: preprocessing and postprocessing.

Given input $\mathbf{x}$, let $\mathcal{L}_{\text{real}}(\mathbf{x})$ be the true margin loss and $\mathcal{L}_{\text{observe}}(\mathbf{x})$ the attacker-observed loss. Preprocessing applies a (typically randomized) transformation $\mathcal{D}_{\text{pre}}$ before inference:
\[
\mathcal{L}_{\text{observe}}(\mathbf{x}) = \mathcal{L}_{\text{real}}(\mathcal{D}_{\text{pre}}(\mathbf{x})).
\]
Postprocessing perturbs the loss directly after prediction:
\[
\mathcal{L}_{\text{observe}}(\mathbf{x}) = \mathcal{D}_{\text{post}}(\mathcal{L}_{\text{real}}(\mathbf{x})).
\]

Preprocessing may harm prediction accuracy on benign inputs, since $f(\mathcal{D}_{\text{pre}}(\mathbf{x}))$ may differ from $f(\mathbf{x})$. In contrast, postprocessing preserves the original prediction while modifying only the loss.

\subsection{Robustness Measure}

We adopt a formal robustness notion proposed by \cite{katz2017towards} to characterize the stability of a classifier’s output scores under small input perturbations.

\begin{definition}[Global Robustness]
\label{def:global_robust}
A classifier $f: \mathcal{X} \rightarrow \mathcal{Y}$ is said to be $(\delta, \epsilon)$-globally-robust in an input region $D$ \textbf{iff}
\begin{align*}
\forall \mathbf{x}_1, \mathbf{x}_2 \in D, \ 
&\Vert \mathbf{x}_1 - \mathbf{x}_2 \Vert \le \delta \\
&\Rightarrow \ \forall y \in \mathcal{Y}, \ 
\vert f_y(\mathbf{x}_1) - f_y(\mathbf{x}_2) \vert < \epsilon.
\end{align*}
\end{definition}

This definition requires that for any two inputs $\mathbf{x}_1, \mathbf{x}_2 \in D \subseteq \mathcal{X}$ that are close to each other, the model produces similar confidence scores for all classes. 
We will later use this notion to formalize assumptions about model behavior and to quantify the cost and guarantees of the proposed defense.

\section{Defense Against Adaptive Attack}

\subsection{Adaptive Attack on Existing Work}
\label{sec:AAAAA}

We begin by examining why and how the state-of-the-art defense method AAA \citep{AAA} is vulnerable to adaptive attacks.

AAA is the first plug-and-play post-processing defense, whose loss mapping function $\mathcal{D}_{\text{post}}$ is defined as
\begin{equation*}
\mathcal{D}_{\text{post}}(\mathcal{L}) = 2\lfloor \mathcal{L}/\tau \rfloor\tau + \tau - \mathcal{L},
\end{equation*}
where $\tau$ is a constant determining the length of each monotonic interval.
%The image of $\mathcal{D}_{\text{post}}$ is shown in figure \ref{fig:AAA}.
The blue curve in Figure~\ref{fig:AAA} shows the image of this $\mathcal{D}_{\text{post}}$.

\begin{figure}
  \centering
  \includegraphics[width=0.3 \textwidth]{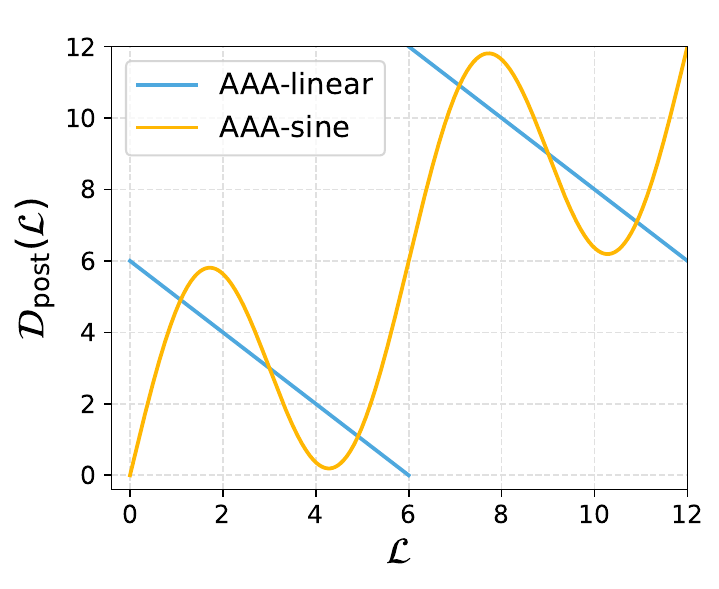}
  \caption{$\mathcal{D}_{\text{post}}$ of AAA}
  \label{fig:AAA}
\end{figure}

This version, known as AAA-linear, was later found to be vulnerable after reviewer feedback: by reversing the descent direction within each interval, it enables attackers to adapt by switching to maximization. Specifically, Eq.~\ref{min-decide} can be replaced with:
\begin{equation}
\mathbf{x}^{k+1} = \left\{
\begin{aligned}
\mathbf{x}^{k}, \ \ \mathcal{L}(\mathbf{x}^{\text{new}},y) < \mathcal{L}(\mathbf{x}^k,y) \\
\mathbf{x}^{\text{new}}, \ \ \mathcal{L}(\mathbf{x}^{\text{new}},y) \ge \mathcal{L}(\mathbf{x}^k,y) \\
\end{aligned}
\right.
\label{max-decide}
\end{equation}

To address this vulnerability, AAA-sine was proposed. With a constant $\alpha$ (0.7 by default), $\mathcal{D}_{\text{post}}$ is defined as
\begin{equation*}
\mathcal{D}_{\text{post}}(\mathcal{L}) = \mathcal{L} - \alpha\tau\sin(\pi(1-(2\mathcal{L}- (2\lfloor\mathcal{L}/\tau\rfloor+1)\tau   )/\tau)).
\end{equation*}
By combining increasing and decreasing intervals, this $\mathcal{D}_{\text{post}}$ aims to defend against both minimization and maximization steps, as shown in Figure~\ref{fig:AAA}. AAA-sine is therefore claimed to be robust to adaptive attacks.
However, this design still has fundamental flaws. Each monotonic interval in $\mathcal{D}_{\text{post}}$ is only capable of misleading either minimization or maximization steps, but not both simultaneously.
To exploit this weakness, we design a simple adaptive tactic (Algorithm~\ref{alg:reverse}) that alternates between minimization and maximization. The attack begins with standard minimization until no improvement is observed for $t$ consecutive iterations. It then switches to maximization using Eq.~\ref{max-decide} until the next stagnation, after which the direction-reversing cycle repeats.
Despite its simplicity, this direction reversing tactic significantly degrades AAA's effectiveness. For instance, the under-attack accuracy of AAA-sine drops from 61.7\% to 41.5\% (see Section~\ref{sec:results}).

\begin{algorithm}[tb]
\caption{reverse-tactic-SQA$(\mathbf{x}_0,G,t)$}
\label{alg:reverse}
\Input{clean sample $\mathbf{x}_0$, sample generator $G$, threshold $t$}
\Output{AE candidate $\mathbf{x}$}

Initialize $reverse \gets \texttt{False}$, $cnt \gets 0$\;
$y_0 \gets f(\mathbf{x}_0)$, $\mathbf{x} \gets \mathbf{x}_0$, $loss \gets \mathcal{L}(\mathbf{x},y_0)$\;

\While{$loss>0$ \textbf{and} not out of query budget}{
    $\mathbf{x}_{\text{new}} \gets G(\mathbf{x})$\;
    $loss_{\text{new}} \gets \mathcal{L}(\mathbf{x}_{\text{new}},y_0)$\;
    
    \If{$\neg ((loss_{\text{new}}<loss) \oplus reverse)$}{
        $cnt \gets cnt + 1$\;
    }
    
    \If{$cnt > t$}{
        $cnt \gets 0$, $reverse \gets \neg reverse$\;
    }
    
    \If{$(loss_{\text{new}}<loss) \oplus reverse$}{
        $cnt \gets 0$, $\mathbf{x} \gets \mathbf{x}_{\text{new}}$, $loss \gets loss_{\text{new}}$\;
    }
}

\KwRet{$\mathbf{x}$}
\end{algorithm}

\subsection{Dashed Line Defense}
\label{sec:DLD}

To address the weaknesses of post-processing defenses discussed above, we propose a novel method called Dashed Line Defense (DLD), where $\mathcal{D}_{\text{post}}$ is specifically designed to interfere with both minimization and maximization steps. 
A key insight behind DLD is that $\mathcal{D}_{\text{post}}$ must be non-smooth; otherwise, any monotonic interval of a continuous and smooth $\mathcal{D}_{\text{post}}$ would allow attackers to infer a reliable gradient direction, rendering the direction-reversal tactic (Algorithm~\ref{alg:reverse}) effective. The DLD mechanism is formally defined below.

\begin{definition}
\label{def:DLD}
Given $\tau > 0$, $h \in [0,1]$, and $S \subseteq [0,1)$, a $(\tau, h, S)$-DLD is a post-processing module whose loss mapping $\mathcal{D}_{\text{post}}$ is defined as follows:

\begin{equation}
\label{eq:bias}
\mathcal{L}_{\text{bias}}(\mathcal{L}; \tau) = \left\lfloor \mathcal{L} / \tau
\right\rfloor \cdot \tau
\end{equation}
\begin{equation}
\label{eq:high}
\begin{split}
\mathcal{L}_{\text{high}}(\mathcal{L}; \tau, h) 
&= \mathcal{L}_{\text{bias}}(\mathcal{L}; \tau) + h\tau \\
&\quad + (1 - h)(\mathcal{L} - \mathcal{L}_{\text{bias}}(\mathcal{L}; \tau))
\end{split}
\end{equation}
\begin{equation}
\label{eq:low}
\mathcal{L}_{\text{low}}(\mathcal{L}; \tau, h) = \mathcal{L}_{\text{bias}}(\mathcal{L}; \tau) + h\tau - h\left(\mathcal{L} - \mathcal{L}_{\text{bias}}(\mathcal{L}; \tau)\right)
\end{equation}
\begin{equation}
\mathcal{D}_{\text{post}}(\mathcal{L}; \tau, h, S) = 
\begin{cases}
\mathcal{L}_{\text{high}}(\mathcal{L}; \tau, h) & \text{\!if } \frac{\mathcal{L} - \mathcal{L}_{\text{bias}}(\mathcal{L}; \tau)}{\tau} \in S \\
\mathcal{L}_{\text{low}}(\mathcal{L}; \tau, h) & \text{otherwise}
\end{cases}
\end{equation}

\end{definition}

For readability, fixed parameters such as $\tau$, $h$, and $S$ are omitted from notations like $\mathcal{D}_{\text{post}}(\mathcal{L}; \tau, h, S)$ when clear from context.
The loss mapping function $\mathcal{D}_{\text{post}}$ in DLD is defined over periodic intervals of length $\tau$, with each interval starting at a point given by Eq.~(\ref{eq:bias}). In practice, $S$ is typically chosen as a union of narrow subintervals, e.g., $(\epsilon, 2\epsilon) \cup (3\epsilon, 4\epsilon) \cup \dots \cup (1 - \epsilon, 1)$, where $\epsilon$ is a small positive constant. 
As illustrated in Fig.~\ref{fig:dld}, the image of $\mathcal{D}_{\text{post}}$ resembles a staircase-like pattern formed by dashed “less-than” symbols. The upper half of each symbol corresponds to $\mathcal{L}_{\text{high}}$, and the lower half corresponds to $\mathcal{L}_{\text{low}}$.

The implementation of $\mathcal{D}_{\text{post}}$ is straightforward.
After the model produces the output scores, a lightweight post-processing module modifies them in a label-preserving manner: it adjusts the difference between the top two scores—that is, the margin loss—to match the output of $\mathcal{D}_{\text{post}}$.
This incurs negligible computational overhead.
Since the predicted label is not changed, DLD does not modify the decision boundary and therefore lies outside the scope of decision-based (label-only) attacks (e.g., \cite{cheng2019query,chen2020hopskipjumpattack}); in such settings, DLD neither helps nor harms defense.

Note that $\vert \mathcal{L} - \mathcal{D}_{\text{post}}(\mathcal{L}) \vert \le \tau$, which bounds the impact on the model’s prediction confidence.
In our implementation, we modify only the score of the highest-scoring class while keeping all other scores unchanged, which suffices to realize the desired margin transformation.
As a result, under this implementation, if a model $f$ is $(\delta,\epsilon)$-globally-robust, then its DLD-defended version remains $(\delta,\epsilon + 2\tau)$-globally-robust. Increasing $\tau$ may enhance defense but could also introduce greater distortion to the model’s outputs.

\begin{figure}
  \centering
  \includegraphics[width=0.3\textwidth]{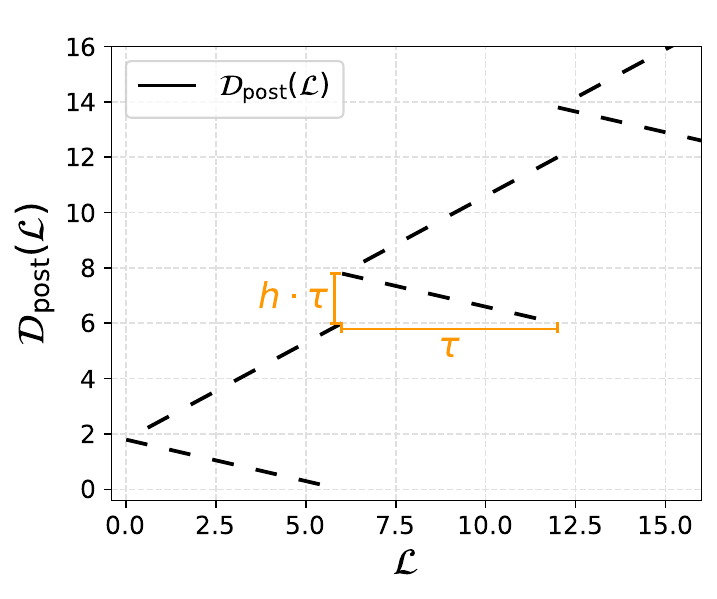}
  \caption{$\mathcal{D}_{\text{post}}$ of $(\tau,h,S)$-DLD with $\tau = 8$, $h = 0.3$, and $S = (0.1,0.2) \cup (0.3,0.4) \cup \dots \cup (0.9,1)$. Although the image may appear as dashed lines, it is actually composed of multiple solid lines.}
  \label{fig:dld}
\end{figure}

Given $\tau$, $h$, and $S$, the function $\mathcal{D}_{\text{post}}$ defined by DLD introduces ambiguity in the relationship between observed loss and the quality of AE candidates.  
As illustrated in Figure~\ref{fig:dld}, each “\(<\)” interval consists of two branches: if a sample's true loss maps to the upper branch ($\mathcal{L}_{\text{high}}$), a better sample (smaller true loss) yields a smaller observed loss; but if it maps to the lower branch ($\mathcal{L}_{\text{low}}$), a better sample produces a larger observed loss.  
Since the attacker cannot determine whether a sample corresponds to $\mathcal{L}_{\text{high}}$ or $\mathcal{L}_{\text{low}}$, deciding whether to accept a higher or lower loss becomes unreliable.
Formally, $\mathcal{D}_{\text{post}}$ satisfies:
\begin{itemize}
\item For any $\mathcal{L}_1, \mathcal{L}_2$ with $\mathcal{L}_{\text{bias}}(\mathcal{L}_1) = \mathcal{L}_{\text{bias}}(\mathcal{L}_2)$, we have $\mathcal{L}_{\text{low}}(\mathcal{L}_1) \le \mathcal{L}_{\text{high}}(\mathcal{L}_2)$.
\item If $\mathcal{L}_{\text{bias}}(\mathcal{L}_1) \!=\! \mathcal{L}_{\text{bias}}(\mathcal{L}_2)$, $\mathcal{L}_1 \!<\! \mathcal{L}_2$, and $\mathcal{D}_{\text{post}}(\mathcal{L}_2) \!=\! \mathcal{L}_{\text{low}}(\mathcal{L}_2)$, then $\mathcal{D}_{\text{post}}(\mathcal{L}_1) \!>\! \mathcal{D}_{\text{post}}(\mathcal{L}_2)$.
\item If $\mathcal{L}_{\text{bias}}(\mathcal{L}_1) \!=\! \mathcal{L}_{\text{bias}}(\mathcal{L}_2)$, $\mathcal{L}_1 \!<\! \mathcal{L}_2$, and $\mathcal{D}_{\text{post}}(\mathcal{L}_2) \!=\! \mathcal{L}_{\text{high}}(\mathcal{L}_2)$, then $\mathcal{D}_{\text{post}}(\mathcal{L}_1) \!<\! \mathcal{D}_{\text{post}}(\mathcal{L}_2)$.
\end{itemize}
These properties ensure discontinuities that simultaneously mislead both minimization and maximization.

During minimization, suppose the attacker holds a current best sample $\mathbf{x}^k$ with true loss $\mathcal{L}^k$, where $\mathcal{D}_{\text{post}}(\mathcal{L}^k) = \mathcal{L}_{\text{high}}(\mathcal{L}^k)$.  
If a new candidate $\mathbf{x}^{\text{new}}$ has a smaller true loss $\mathcal{L}^{\text{new}} < \mathcal{L}^k$ but maps to $\mathcal{L}_{\text{low}}(\mathcal{L}^{\text{new}})$, it will be accepted, yet the second property ensures that any smaller true loss in the same “\(<\)” interval will be assigned a higher observed loss.  
This creates a local “trap” that misleads further minimization.

The attacker can avoid the trap only in two ways:  
(1) generating a much better $\mathbf{x}^{\text{new}}$ with significantly lower loss to jump into the next interval, or  
(2) landing in regions where $\mathcal{D}_{\text{post}}(\mathcal{L}^{\text{new}}) = \mathcal{L}_{\text{high}}(\mathcal{L}^{\text{new}})$.  
The first is rare, while the second is also difficult, with only about a $1/2$ chance per trial for typical $S$, and an overall probability that becomes very
small over successive iterations.
Eventually, with high probability, the observed loss stabilizes near $\mathcal{L}_{\text{bias}}(\mathcal{L}^k)$. Switching to maximization merely flips the roles of $\mathcal{L}_{\text{high}}$ and $\mathcal{L}_{\text{low}}$, highlighting DLD's symmetric disruption of both optimization directions.

\subsection{Theoretical Analysis of DLD}

To formally understand the defensive behavior of DLD, we consider the following three assumptions to characterize the properties of the model, the clean input data, and the attacker's optimizer, respectively.

\begin{assumption}
\label{ass:robust}
The victim model $f$ is $(r,\epsilon/2)$-globally-robust 
within the region $\mathcal{N}(\mathbf{x}_0,\epsilon_\text{n})$.
\end{assumption}

\begin{assumption}
\label{ass:x0}
$\mathcal{L}_{\text{real}}(\mathbf{x}_0) \ge \tau$.

\end{assumption}

\begin{assumption}
\label{ass:iteration}
For all $\mathbf{x}$, $\Vert G_{(\mathbf{x}_0,\epsilon_\text{n})}(\mathbf{x}) - \mathbf{x} \Vert \le r$.
\end{assumption}

For theoretical tractability, we introduce a randomized version of DLD. This variant approximates the deterministic $(\tau,h,S)$-DLD while allowing us to model $\mathcal{D}_{\text{post}}$ outputs as i.i.d. Bernoulli trials.

\begin{definition}
\label{def:rand}
A $(\tau, h, p)$-random-DLD is a post-processing defense module where $\mathcal{D}_{\text{post}}(\mathcal{L})$ is randomly chosen as follows:
\begin{equation*}
\begin{aligned}
\mathrm{P}\big(\mathcal{D}_{\text{post}}(\mathcal{L}) = \mathcal{L}_{\text{high}}(\mathcal{L})\big) &= p,
\\
\mathrm{P}\big(\mathcal{D}_{\text{post}}(\mathcal{L}) = \mathcal{L}_{\text{low}}(\mathcal{L})\big) &= 1-p,
\end{aligned}
\end{equation*}
where $\mathcal{L}_{\text{high}}$ and $\mathcal{L}_{\text{low}}$ are defined in Eqs.~(\ref{eq:high}) and (\ref{eq:low}), respectively.
\end{definition}

Note that $(\tau, h, p)$-random-DLD and $(\tau, h, S)$-DLD share the same good properties and are very similar in practice; only the random version is easier to bypass, and is therefore introduced solely for theoretical analysis (see Appendix \ref{apd:rand-DLD} for details).

We now evaluate DLD's robustness against standard score-based black-box attacks. The following theorem considers a strong adversary with an \emph{infinite query budget} using the standard-SQA procedure under the margin loss defined in Eq.~(\ref{eq:margin_loss_1}), and shows that the success probability remains exponentially small. The proof is in Appendix \ref{apd:proof}.

\begin{theorem}
\label{thm:smallprob}
Under assumptions \ref{ass:robust}, \ref{ass:x0}, and \ref{ass:iteration}, the success probability of standard-SQA$(\mathbf{x}_0,G_{(\mathbf{x}_0,\epsilon_\text{n})})$ against a $(\tau,h,p)$-random-DLD defended $f$ with infinite queries is at most
\begin{equation}\notag
p^{\left\lfloor \frac{\tau - \epsilon}{\epsilon} \right\rfloor}.
\end{equation}
\end{theorem}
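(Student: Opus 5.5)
The plan is to isolate the last interval $[0,\tau)$ of the staircase. Any successful run must pass through it from top to bottom. I will show that each move inside this interval that makes progress must land on the $\mathcal{L}_{\text{high}}$ branch, and that landing on $\mathcal{L}_{\text{low}}$ traps the attacker permanently. Throughout, I model the randomness of Definition~\ref{def:rand} as independent across queries: each fresh query independently picks $\mathcal{L}_{\text{high}}$ with probability $p$.

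\emph{Step 1 (bounded true-loss steps).} By Assumption~\ref{ass:iteration}, every candidate satisfies $\Vert G(\mathbf{x})-\mathbf{x}\Vert\le r$. By Assumption~\ref{ass:robust}, every class score then changes by less than $\epsilon/2$. Since the margin loss is a difference of two such scores (a top score and a max over other scores, each moving by less than $\epsilon/2$), I get $|\mathcal{L}_{\text{real}}(G(\mathbf{x}))-\mathcal{L}_{\text{real}}(\mathbf{x})|<\epsilon$. Both points stay in $\mathcal{N}(\mathbf{x}_0,\epsilon_\text{n})$ because $G$ maps into it.

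\emph{Step 2 (entering $[0,\tau)$).} By Assumption~\ref{ass:x0} the run starts with true loss at least $\tau$. To succeed it must reach true loss below $0$, so it must at some point accept a sample whose true loss lies in $[0,\tau)$. The first such accepted sample comes from a current point with true loss $\ge\tau$, so by Step~1 its true loss exceeds $\tau-\epsilon$. From any current point with true loss $\ge\epsilon$, no single step reaches a negative loss. Hence success requires the accepted current points in this interval to descend from above $\tau-\epsilon$ to below $\epsilon$, each step lowering the true loss by less than $\epsilon$. This needs at least $\lfloor(\tau-\epsilon)/\epsilon\rfloor$ accepted samples with true loss in $[0,\tau)$, the entry sample included.

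\emph{Step 3 (the trap).} Suppose the current point has true loss $\mathcal{L}^k\in[0,\tau)$ and observed value $\mathcal{L}_{\text{low}}(\mathcal{L}^k)$. Consider a candidate with $\mathcal{L}^{\text{new}}\in[0,\tau)$ and $\mathcal{L}^{\text{new}}<\mathcal{L}^k$.
\begin{itemize}
\item If it draws $\mathcal{L}_{\text{low}}$, its observed value is larger, since $\mathcal{L}_{\text{low}}$ is decreasing in $\mathcal{L}$ (second listed property).
\item If it draws $\mathcal{L}_{\text{high}}$, its observed value is at least $h\tau\ge\mathcal{L}_{\text{low}}(\mathcal{L}^k)$ (first listed property).
\end{itemize}
Either way the candidate is rejected, up to a measure-zero tie at the shared corner that I will handle with the strict inequality in Eq.~\ref{min-decide}. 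Candidates with larger true loss inside $[0,\tau)$ can only be accepted onto another $\mathcal{L}_{\text{low}}$ point with larger true loss, which is still trapped. Candidates outside $[0,\tau)$ are unreachable in one step once the true loss is above $\epsilon$, or have higher observed loss. So a low-branch current point with true loss $\ge\epsilon$ can never lead to success, even with infinitely many queries.

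\emph{Step 4 (counting).} By Step~3, the entry sample into $[0,\tau)$ and every subsequently accepted progressing sample must have drawn $\mathcal{L}_{\text{high}}$, and there are at least $\lfloor(\tau-\epsilon)/\epsilon\rfloor$ of them by Step~2. These draws come from distinct queries. Conditioning successively on the history up to each accepted sample, each draw is high with probability $p$, independently. This gives success probability at most $p^{\lfloor(\tau-\epsilon)/\epsilon\rfloor}$.

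\emph{Main obstacle.} The hard part is making this chain argument rigorous with infinitely many queries. The attacker may issue unboundedly many rejected queries, and acceptance is itself random. I would define stopping times $T_1<T_2<\dots$ at the acceptances inside $[0,\tau)$. Each acceptance is decided by the branch drawn at that query, which is independent of the past. A low outcome is absorbing by Step~3, so the probability of surviving $m$ acceptances is at most $p^m$ by the tower property. A second point needing care is the boundary (ties) case in Step~3.
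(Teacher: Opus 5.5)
Your proof is correct. It shares the paper's skeleton: a per-step margin change below $\epsilon$ from Assumptions~\ref{ass:robust} and~\ref{ass:iteration}, a permanent trap once an accepted point with true loss at least $\epsilon$ lands on $\mathcal{L}_{\text{low}}$, at least $\lfloor(\tau-\epsilon)/\epsilon\rfloor$ forced $\mathcal{L}_{\text{high}}$ draws, and a product of $p$'s. Restricting the argument to the last interval $[0,\tau)$ is a genuine difference, and it is the better route.

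The paper asserts the trap for every accepted low-branch point with true loss $\ge\epsilon$, and then requires all of $\mathbf{x}_1,\dots,\mathbf{x}_{i-1}$ to be high. That trap claim fails near interval boundaries. For $h>0$, a low-branch point with true loss $k\tau+\delta$, $k\ge 1$, is observed at $k\tau+h(\tau-\delta)>k\tau$. A candidate with true loss just below $k\tau$ is observed at most $k\tau$ on either branch, so it is accepted. For example, with $\tau=6$, $h=0.3$, $\epsilon=1$: true loss $6.2$ on the low branch is observed as $7.74$, while true loss $5.5$ is observed as $5.65$ or $0.15$. Inside $[0,\tau)$ the only lower region is negative loss, which is unreachable in one step from true loss $\ge\epsilon$, so your Step~3 holds. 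The descent from above $\tau-\epsilon$ to below $\epsilon$ within this one interval already yields the exponent.

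You also drop the paper's reduction to a generator that never increases the true loss. The price is a selection effect that Step~4 should state explicitly: conditional on acceptance, the drawn branch is not high with probability exactly $p$. From an $\mathcal{L}_{\text{high}}$ point, an improving candidate is accepted whatever it draws, while a non-improving one is accepted only on an $\mathcal{L}_{\text{low}}$ draw. The inequality to feed into your stopping-time argument is
$\mathrm{P}(\text{accepted and high}\mid\text{history})\le p\,\mathrm{P}(\text{accepted}\mid\text{history})$ at every query. This gives at most $p$ per acceptance.

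Finally, phrase the count in Steps~2 and~4 for accepted points with true loss in $[\epsilon,\tau)$ rather than $[0,\tau)$, since the last point below $\epsilon$ may harmlessly sit on the low branch. There are still $m\ge\lfloor(\tau-\epsilon)/\epsilon\rfloor$ such points, because $m\epsilon>\tau-2\epsilon$.
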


This confirms the earlier intuition: even with unlimited queries, breaking DLD via standard-SQA is exponentially unlikely. A larger $\tau$ further increases the attack difficulty.

\begin{remark}
Assumption~\ref{ass:robust} is reasonable, as prior work~\citep{katz2017towards,katz2017reluplex} provided a method to verify it. 
By contrast, the assumptions used in RND~\citep{RND}—requiring the model to be Lipschitz continuous and its derivative to also be Lipschitz continuous—are overly restrictive in practice. Even then, their analysis yields little insight: lower bounds were needed, but only upper bounds were obtained.
\end{remark}

The next theorem quantifies how DLD disrupts reverse-tactic-SQA under the same margin loss defined in Eq.~(\ref{eq:margin_loss_1}) by forcing frequent direction switches. The proof is in Appendix \ref{apd:proof}.

\begin{theorem}
\label{thm:reverse}
Under assumptions \ref{ass:robust}, \ref{ass:x0}, and \ref{ass:iteration}, the expected number of reversals (i.e., the binary flag $reverse$ flips from \texttt{True} to \texttt{False}, or vice versa) in reverse-tactic-SQA$(\mathbf{x}_0, G_{(\mathbf{x}_0,\epsilon_\text{n})}, t)$ against $(\tau, h, p)$-random-DLD defended model $f$ is at least
\begin{equation}\notag
2\left\lfloor\frac{\tau - 2\epsilon}{\epsilon}\right\rfloor \cdot p(1 - p).
\end{equation}
\end{theorem}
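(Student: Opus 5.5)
The plan is to follow the structure I expect for Theorem~\ref{thm:smallprob}: use Assumptions~\ref{ass:robust} and \ref{ass:iteration} to control how far the true loss can move per iteration, use Assumption~\ref{ass:x0} to guarantee that the trajectory must cross a full interval of length $\tau$ before succeeding, and then lower-bound the number of reversals by a sum of indicator expectations computed from the i.i.d.\ Bernoulli model of Definition~\ref{def:rand}.

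\textbf{Step 1: per-step movement of the true loss.} By Assumption~\ref{ass:iteration}, $\Vert G(\mathbf{x})-\mathbf{x}\Vert\le r$. Assumption~\ref{ass:robust} then gives $|f_y(\mathbf{x}_{\text{new}})-f_y(\mathbf{x})|<\epsilon/2$ for every class $y$. Since the margin loss is a difference of one score and a max of scores, the true loss changes by less than $\epsilon$ per iteration: $|\mathcal{L}_{\text{real}}(\mathbf{x}_{\text{new}})-\mathcal{L}_{\text{real}}(\mathbf{x})|<\epsilon$.

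\textbf{Step 2: forced passage through a whole interval.} By Assumption~\ref{ass:x0}, the accepted iterate's true loss starts at or above $\tau$. Success requires the true loss to become negative, so the sequence of accepted true losses must traverse the interval $[0,\tau)$, where $\mathcal{L}_{\text{bias}}=0$. Each accepted step moves by less than $\epsilon$. Partition $[0,\tau)$ into about $\lfloor(\tau-2\epsilon)/\epsilon\rfloor$ disjoint sub-bands of width $\epsilon$, discarding margins of width $\epsilon$ at both ends to handle boundary effects. Each interior band must contain at least one accepted iterate $\mathbf{x}^{(j)}$. The next candidate generated from $\mathbf{x}^{(j)}$ then lies in the same $\tau$-interval, so the monotonicity properties of $\mathcal{L}_{\text{high}}$ and $\mathcal{L}_{\text{low}}$ listed after Definition~\ref{def:DLD} apply to the comparison.

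\textbf{Step 3: each band contributes a reversal with probability $p(1-p)$.} At an accepted point in band $j$, the Bernoulli label drawn for the current iterate is independent of the label drawn for any new candidate (fresh i.i.d.\ draws). Suppose the attacker is minimizing and a truly better candidate arrives.
\begin{itemize}
\item If the current point maps to $\mathcal{L}_{\text{high}}$ (probability $p$) and the candidate maps to $\mathcal{L}_{\text{low}}$ (probability $1-p$), the candidate looks better and is accepted.
\item By the second property, every truly better point in the same interval then looks worse unless it is labeled high. This is the trap that drives the stagnation counter past $t$ and forces a flip.
\end{itemize}
The symmetric case, low then high, traps a maximizing phase. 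Hence in each band the event ``the mixed configuration that traps the current direction occurs'' has probability at least $p(1-p)$ for each of the two orders. Counting both orders gives $2p(1-p)$ per band. Summing the indicators over the $\lfloor(\tau-2\epsilon)/\epsilon\rfloor$ disjoint bands and applying linearity of expectation yields the bound.

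\textbf{Main obstacle.} The hard step is Step 3's claim that the trap configuration really causes a flip of $reverse$ rather than being escaped. The attacker could generate a candidate in a later band, labeled favorably, before $cnt$ exceeds $t$. I would first try to avoid bounding escape probabilities directly. Instead, define the reversal event only via the Bernoulli labels at the first accepted point in each band, and argue that whenever the trajectory leaves a band in the current optimization direction, the label pattern must be one that either (a) is counted as a reversal, or (b) occurs in the other order, which forces a reversal in the opposite direction. This gives the factor $2$. If that coupling fails, I would fall back to conditioning on the filtration of accepted iterates and using the optional-stopping form of Wald's identity on the indicator sum.
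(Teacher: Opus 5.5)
\paragraph{Step 3 does not go through as written.} Step 3 carries the whole theorem, and the repairs you sketch (coupling, Wald) are not specified. The trap is also mis-described. Take a low-labelled point accepted while minimizing. By the second listed property, which places no condition on the candidate's label, every candidate in the same $\tau$-interval with smaller true loss has strictly larger observed loss. It is therefore rejected \emph{whatever} its label. So the escape you name as the main obstacle, a favourably labelled candidate further down, cannot occur inside the interval, and your margins already keep candidates inside $[0,\tau)$.

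The leak you do not address is upward. A candidate with \emph{larger} true loss $\mathcal{L}'>\mathcal{L}$, labelled low, has $\mathcal{L}_{\text{low}}(\mathcal{L}')<\mathcal{L}_{\text{low}}(\mathcal{L})$. It is accepted and resets $cnt$, so no flip is forced. Acceptance then also depends on the candidate's own label: while minimizing from a high-labelled point, a truly worse candidate is accepted only if labelled low. The labels of accepted iterates are therefore not i.i.d.\ Bernoulli$(p)$, and ``fresh i.i.d.\ draws'' does not yield $p(1-p)$. The missing step is the paper's reduction: assume $\mathcal{L}_{\text{real}}(G(\mathbf{x}))\le\mathcal{L}_{\text{real}}(\mathbf{x})$. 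The paper justifies this only informally, as favouring the attacker.

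\paragraph{What the reduction gives.} Under it, the second and third properties give a clean dichotomy within one interval:
\begin{itemize}
\item If the current accepted point is labelled high while minimizing, or low while maximizing, every candidate is accepted regardless of its own label.
\item Otherwise every candidate is rejected, $cnt$ exceeds $t$, and the flag flips. Since Algorithm~\ref{alg:reverse} tests acceptance with the updated flag, that same candidate is then accepted.
\end{itemize}
Hence after each accepted iterate the direction is ``minimize'' iff its label is high. A reversal occurs exactly when the labels of two consecutive accepted iterates differ, and accepted labels are i.i.d.\ Bernoulli$(p)$. This is what underlies the paper's indicator $X_i$ with $\mathrm{E}(X_i)=2p(1-p)$. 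The paper sums it over the $N-1$ consecutive pairs among the first $N=\lfloor(\tau-\epsilon)/\epsilon\rfloor$ accepted iterates.

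Your ``count both orders'' step needs this same invariant. Conditional on the current phase, the flip probability at the next accepted point is $1-p$ when minimizing and $p$ when maximizing, not $2p(1-p)$. The value $2p(1-p)$ arises only because the phase is fixed by the previous accepted label. That makes high-then-low and low-then-high disjoint sub-events of the single event ``consecutive accepted labels differ.''

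\paragraph{Your band count after the repair.} With this in place, your bands are a valid alternative count. Use as indicator the event that the first accepted iterate in band $j$ and its predecessor carry different labels. This gives the same $\lfloor(\tau-2\epsilon)/\epsilon\rfloor$ terms. It has the added merit that every compared pair and every candidate stays inside $[0,\tau)$, a boundary effect the paper's count of consecutive pairs passes over.
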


This result highlights the inherent limitation of Algorithm \ref{alg:reverse} under DLD. Empirically (see Section~\ref{sec:results}), the attack’s effectiveness is even lower than this bound, since the proof assumes an optimally capable attacker.

\subsection{DLD under Adaptive Attack}

In the presence of defensive mechanisms like DLD, attackers should take into account the potential influence of the defense when designing their strategy. A practical heuristic approach is to occasionally accept worse candidates in order to escape local traps during the optimization process, rather than strictly following the descent direction. This idea is illustrated in Algorithm~\ref{alg:rand}, where the function $Prob$ determines the acceptance probability of a non-improving candidate in each iteration. This reflects an adaptive mindset that accounts for defensive interference.

In this paper, we, as defenders, also adopt an adaptive mindset, assuming that attackers may employ heuristic tactics. Specifically, we adopt a simulated annealing–style acceptance strategy as a representative tactic. This method makes minimal changes to a standard SQA while adding the flexibility needed to escape local traps. As shown in Section~\ref{sec:results}, 
even under this adaptive tactic—where DLD exhibits its worst-case performance—DLD remains as effective as the state-of-the-art AAA.

\begin{algorithm}[tb]
\caption{randomized-tactic-\\SQA$(\mathbf{x}_0,G,Prob)$}
\label{alg:rand}
\Input{clean sample $\mathbf{x}_0$, sample generator $G$, probability generator $Prob$}
\Output{AE candidate $\mathbf{x}$}

$y_0 \gets f(\mathbf{x}_0)$, $\mathbf{x} \gets \mathbf{x}_0$\;

\While{$\mathcal{L}(\mathbf{x},y_0)>0$ \textbf{and} not out of query budget}{
    $\mathbf{x}_{\text{new}} \gets G(\mathbf{x})$\;
    
    \If{$\mathcal{L}(\mathbf{x}_{\text{new}},y_0) < \mathcal{L}(\mathbf{x},y_0)$}{
        $\mathbf{x} \gets \mathbf{x}_{\text{new}}$\;
    }
    \Else{
        with probability $Prob$, $\mathbf{x} \gets \mathbf{x}_{\text{new}}$\;
    }
}

\KwRet{$\mathbf{x}$}
\end{algorithm}

\section{Experiments}

We assess the effectiveness of DLD on ImageNet against adaptive attacks and compare it with existing plug-and-play defenses. Our experiments demonstrate DLD's superior robustness and analyzing the impact of its key parameters. Code is available at https://github.com/fyzdalao/Dashed-Line-Defense.

\subsection{Setup}

We compare DLD against two plug-and-play baselines: RND, an state-of-the-art pre-processing defense that injects noise into the model input, and AAA, a post-processing defense previously described.
For RND, we adopt two noise levels, $\nu = 0.01$ and $\nu = 0.02$, following its original settings. For AAA, we implement both its linear and sinusoidal variants with original default parameters ($\tau = 6$, $\alpha = 0.7$). 
For DLD, we evaluate two variants: the deterministic $(\tau, h, S)$-DLD, where $S = (0, 0.02] \cup (0.04, 0.06] \cup \dots \cup (0.96, 0.98]$, and the randomized $(\tau, h, p)$-random-DLD  with $p = |S| = 0.5$. Both use $\tau = 6$ and $h = 0.3$.

In line with prior work, the evaluation uses 1,000 correctly classified ImageNet samples, one randomly selected from each class. We use PyTorch-pretrained WideResNet-50, RegNet-Y 1.6GF, and MaxViT-T, which we observed to exhibit increasing robustness in this order. AEs are generated by Square Attack~\citep{Square} with an $\ell_\infty$ noise budget $\epsilon_\text{n} = 0.05$ and query budget $n=2500$.

In this work, we exclusively use Square Attack for generating AEs, as the recent benchmark BlackboxBench \citep{zheng2025blackboxbench} identifies it as the strongest black-box SQA available, significantly outperforming other methods.
Echoing this, we observed in initial tests that commonly used alternatives such as Bandits Attack \citep{ilyas2019prior} and Sign Hunter \citep{al2020sign} were not strong enough to yield meaningful experimental results in our setting.

To evaluate robustness against adaptive tactics, we modify Square Attack in four ways. “Standard” uses the original update rule (Algorithm~\ref{alg:standard}); “reverse” flips the update direction (Algorithm~\ref{alg:reverse}); “explore” introduces fixed-probability stochastic exploration (Algorithm~\ref{alg:rand}); and “SA” applies simulated annealing with a time-decaying acceptance probability (Algorithm~\ref{alg:rand}); see Appendix~\ref{apd:parameterForAdaptive} for settings.

\subsection{Numerical Results}
\label{sec:results}

Table~\ref{table:big} presents the test accuracy of various defense methods under different attack tactics. Each row corresponds to an attack tactic, and each column represents a defense method applied to a given model. For each defense, the lowest accuracy across all attack tactics is underlined, reflecting the worst-case performance under adaptive attacks.

We first compare the deterministic $(6, 0.3, S)$-DLD with the randomized $(6, 0.3, 0.5)$-random-DLD, shown in the two rightmost columns of Table~\ref{table:big}. Across all attack tactics, the two variants achieve nearly identical performance, suggesting similar underlying mechanisms: in the randomized version, the probability that $\mathcal{D}_{\text{post}} = \mathcal{L}_{\text{high}}$ is explicitly set to $p$, whereas in the deterministic version this probability is not strictly $p$ but is approximately so in practice. Therefore, theoretical conclusions derived for the randomized version also apply to its deterministic counterpart.

Then, from a game-theoretic perspective, the interaction between attack tactics and defense methods can be viewed as a two-player game, where the underlined accuracy in the DLD column corresponds to the global Nash equilibrium—i.e., the result when both attacker and defender adopt optimal strategies. The fact that this equilibrium lies with DLD, with its underlined value substantially exceeding those of the baselines, indicates that DLD provides stronger defense against adaptive attacks.

Appendix~\ref{sec:asr} presents the attack success rate (ASR) trends during attack iterations, and Appendix~\ref{sec:l2} provides results with $\ell_2$-bounded noise budgets.

\begin{table*}[t]
  \renewcommand{\arraystretch}{0.9}
  \caption{The test accuracy under attacks and defenses (\%)}
  \label{table:big}
  \centering
  \begin{tabular}{c c c c c c c c c c}
\toprule
\multirow{3}{*}{Model} & \multicolumn{2}{c}{Attack} & \multicolumn{7}{c}{Defense} \\ \cmidrule(lr){2-3} \cmidrule(lr){4-10}
 & \multirow{2}{*}{generator $G$} & \multirow{2}{*}{tactic} & \multirow{2}{*}{None} & \multicolumn{2}{c}{RND} & \multicolumn{2}{c}{AAA} & \multicolumn{2}{c}{DLD} \\ 
 &  &  &  & 0.01 & 0.02 & sine & linear & rand & determine \\ \midrule

\multirow{5}{*}{WideResNet} 
 & \multicolumn{2}{c}{~ ~ ~ ~None} & 100 & 97.4 & 95.8 & 100 & 100 & 100 & 100 \\ 
 & \multirow{4}{*}{Square Attack} & standard & 1.6 & \underline{46.4} & 53.3 & 52.4 & 73.4 & 70.2 & 72.3 \\ 
 &  & explore & 36.4 & 52.8 & \underline{53.1} & 55.8 & 70.8 & 64.8 & 64.8 \\ 
 &  & SA & 58.1 & 56.7 & 56.2 & 55.9 & 61.9 & \underline{61.3} & \underline{60.9} \\ 
 &  & reverse & 5.9 & 55.5 & 55.7 & \underline{36.8} & \underline{22.2} & 71.9 & 71.9 \\ \midrule

\multirow{5}{*}{RegNet-Y} 
 & \multicolumn{2}{c}{~ ~ ~ ~None} & 100 & 97.7 & 95.7 & 100 & 100 & 100 & 100 \\ 
 & \multirow{4}{*}{Square Attack} & standard & 3.1 & \underline{49.8} & \underline{53.8} & 61.7 & 79.9 & 76.9 & 76.4 \\ 
 &  & explore & 32.4 & 56.7 & 56.3 & 62.1 & 76.7 & 69.2 & 69.4 \\ 
 &  & SA & 61.5 & 62.5 & 58.4 & 62.6 & 68.5 & \underline{66.6} & \underline{66.3} \\ 
 &  & reverse & 6.9 & 57.6 & 58.2 & \underline{41.5} & \underline{17.5} & 78.6 & 77.7 \\ \midrule

\multirow{5}{*}{MaxViT-T} 
 & \multicolumn{2}{c}{~ ~ ~ ~None} & 100 & 98.6 & 97.5 & 100 & 100 & 100 & 100 \\ 
 & \multirow{4}{*}{Square Attack} & standard & 15.1 & \underline{71.3} & \underline{73.0} & 70.5 & 91.9 & 90.3 & 91.0 \\ 
 &  & explore & 56.7 & 76.1 & 73.6 & 78.7 & 89.6 & 84.6 & 84.2 \\ 
 &  & SA & 78.0 & 78.4 & 76.1 & 77.3 & 82.2 & \underline{82.2} & \underline{81.4} \\ 
 &  & reverse & 39.1 & 77.0 & 76.3 & \underline{56.3} & \underline{40.6} & 90.8 & 90.9 \\ \bottomrule
  \end{tabular}
\end{table*}

\subsection{Hyper-Parameters of DLD}
\label{sec:hyper-param}

\textbf{The choice of $S$} controls whether $\mathcal{D}_{\text{post}}$ returns $\mathcal{L}_{\text{high}}$ or $\mathcal{L}_{\text{low}}$. We define $S$ as a union of small intervals within $(0,1)$: $S = \bigcup_k ((k{-}ratio)\cdot step,\ k\cdot step) \cap (0,1)$, where $step \in (0,1]$ and $ratio \in [0,1]$. When $1/step$ is an integer, this simplifies to $S = \bigcup_k ((k{-}ratio)\cdot step,\ k\cdot step)$ with $|S| = ratio$.

A smaller $|S|$ improves defense by increasing loss ambiguity (Theorem~\ref{thm:smallprob}) but risks making $\mathcal{D}_{\text{post}}$ too continuous, aiding Algorithm~\ref{alg:reverse}. Conversely, larger $|S|$ better preserves the model output but weakens defense due to reduced distortion.
We evaluate this trade-off by running Square Attack on $(6, 0.3, S)$-DLD with $step = 0.04$ and $ratio$ from $0.005$ to $0.99$, using a larger noise budget $\epsilon_\text{n} = 0.07$ to amplify effects. Figure~\ref{fig:ratio} shows that defense performance remains strong at large $ratio$, degrading only when $ratio > 0.9$. At the other end, the reverse tactic is effective only when $ratio < 0.015$.
These results suggest that while $ratio = 0.5$ is a safe default, DLD remains effective across a broad $ratio$ range, enabling flexible trade-offs between robustness and output fidelity.

\begin{figure}[t]
  \centering
  \begin{subfigure}[b]{0.23\textwidth}
    \includegraphics[width=\textwidth]{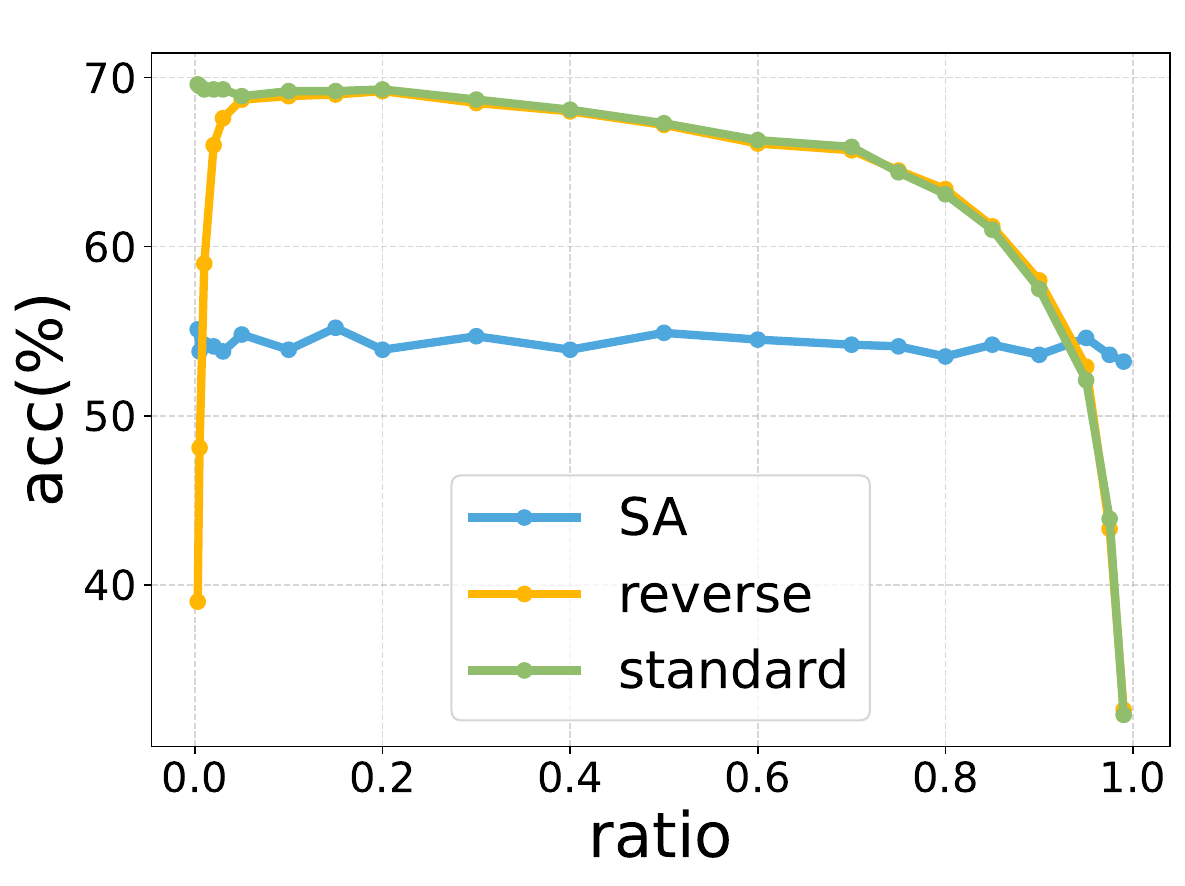}
    \caption{Accuracy vs. ratio}
    \label{fig:ratio}
  \end{subfigure}
  \hfill
  \begin{subfigure}[b]{0.23\textwidth}
    \includegraphics[width=\textwidth]{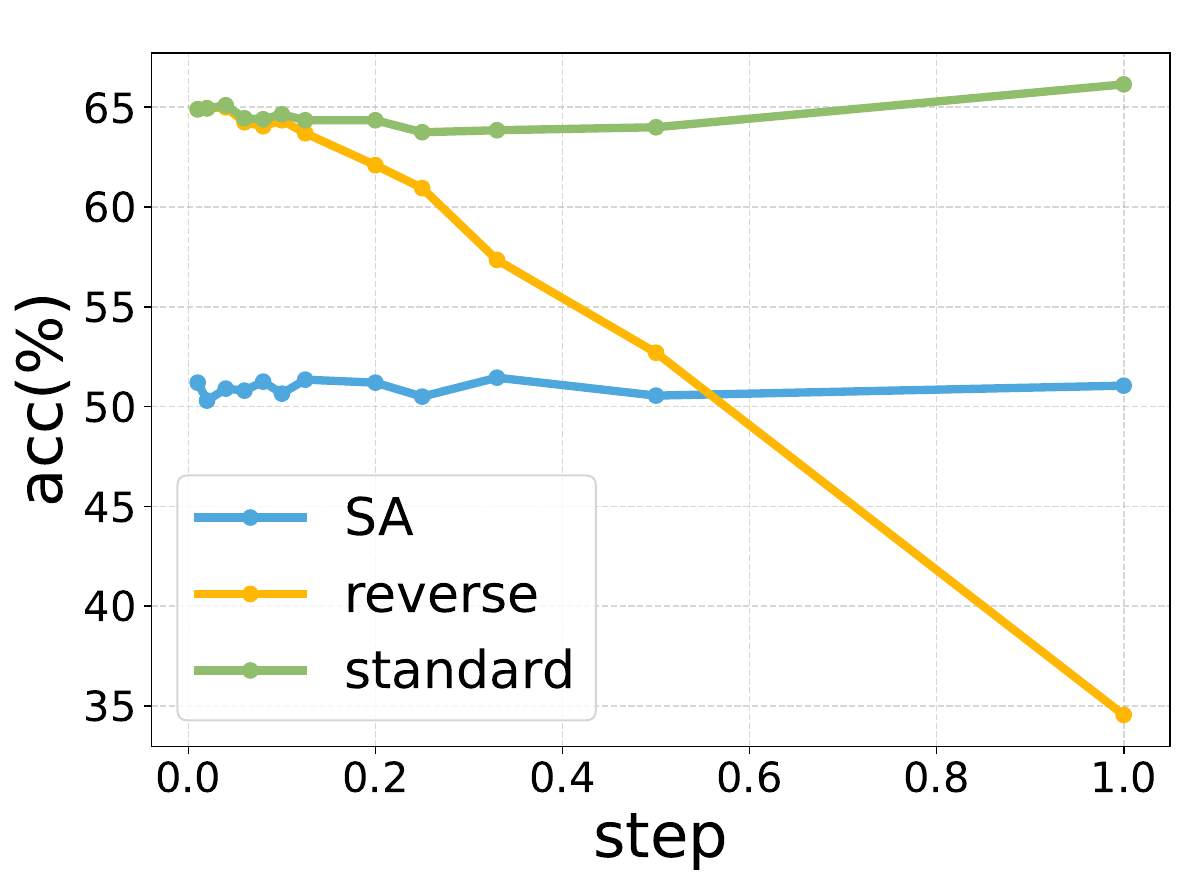}
    \caption{Accuracy vs. step}
    \label{fig:step}
  \end{subfigure}
  \begin{subfigure}[b]{0.34\textwidth}
    \includegraphics[width=\textwidth]{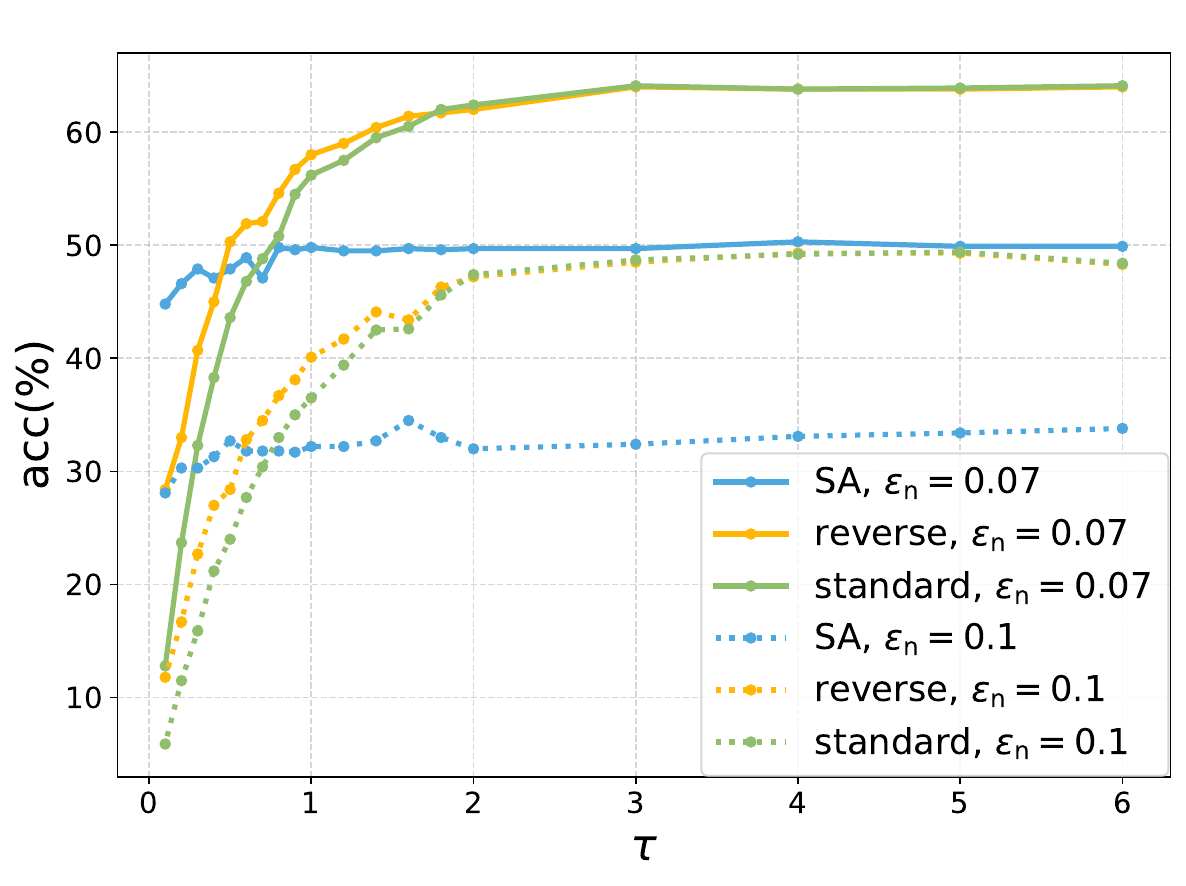}
    \caption{Accuracy vs. $\tau$}
    \label{fig:tau}
  \end{subfigure}
  \caption{Accuracy under Square Attack with different DLD parameters.}
  \label{fig:ratio-step-tau}
\end{figure}

We also assess the effect of $step$ by fixing $ratio = 0.5$ and varying $step$ from $0.01$ to $1$ (Figure~\ref{fig:step}).
As $step$ increases, DLD increasingly resembles AAA, resulting in a minor performance gain against the standard attack but a significant degradation under the reverse tactic.
Nonetheless, with moderate $step$, performance under reverse tactic remains above that of the SA tactic, leaving the attack-defense Nash equilibrium unchanged.

Figure \ref{fig:ratio-step-tau} shows that variations in $ratio$ and $step$ have little impact on performance under SA tactic.
Note that both lower $ratio$ and larger $step$ make DLD more like AAA-linear. 
Thus, DLD and AAA-linear should exhibit similar performance under SA tactic, which is indeed consistent with Section~\ref{sec:results}.

\textbf{The choice of $\tau$}, as discussed in Section~\ref{sec:DLD}, determines the extent to which DLD alters the output scores. 
We vary $\tau$ while fixing $step = 0.04$ and $ratio = 0.5$ under three Square Attack variants (Figure~\ref{fig:tau}), using $n = 2500$ and $\epsilon_\text{n} \in \{0.07, 0.1\}$.
Smaller $\tau$ introduces less output distortion but weakens defense, especially when the noise budget $\epsilon_\text{n}$ is large. In contrast, larger $\tau$ consistently strengthens defense without abrupt drops, making $\tau$ easy to tune: when exact prediction scores are less critical, a larger $\tau$ can be safely chosen to enhance protection.

\section{Conclusion}
In this work, we revealed that the current state-of-the-art runtime defense against SQA remains vulnerable to adaptive tactics.
To address this gap, we introduced Dashed Line Defense (DLD), a plug-and-play post-processing method with a non-continuous loss mapping to confuse attack optimization. Our theoretical analysis and ImageNet experiments demonstrate that DLD provides strong defense and outperforms prior approaches even under worst-case adaptive scenarios. 
We believe this work highlights the need for defenses that explicitly account for attacker adaptivity, while also offering insights for developing more adaptive and diverse attack tactics, inspiring future research on both sides of the attack–defense landscape.

\bibliographystyle{unsrtnat} 
\bibliography{refs}

@inproceedings{guo2024datadebugging,
    author={Zizheng Guo and Jun Wu and Pengyu Chen and Yanzhang Fu and Dongjing Miao},
    title={Data Debugging is {NP-hard} for Classifiers Trained with {SGD}},
    booktitle ={International Computing and Combinatorics Conference (COCOON)},
    year = {2025}
}

@article{AAA,
  title={Adversarial attack on attackers: Post-process to mitigate black-box score-based query attacks},
  author={Chen, Sizhe and Huang, Zhehao and Tao, Qinghua and Wu, Yingwen and Xie, Cihang and Huang, Xiaolin},
  journal={Advances in Neural Information Processing Systems (NeurIPS)},
  volume={35},
  pages={14929--14943},
  year={2022}
}

@article{RND,
  title={Random noise defense against query-based black-box attacks},
  author={Qin, Zeyu and Fan, Yanbo and Zha, Hongyuan and Wu, Baoyuan},
  journal={Advances in Neural Information Processing Systems (NeurIPS)},
  volume={34},
  pages={7650--7663},
  year={2021}
}

@inproceedings{Square,
  title={Square attack: A query-efficient black-box adversarial attack via random search},
  author={Andriushchenko, Maksym and Croce, Francesco and Flammarion, Nicolas and Hein, Matthias},
  booktitle={European Conference on Computer Vision (ECCV)},
  pages={484--501},
  year={2020},
  organization={Springer}
}

@article{katz2017towards,
   title={Towards Proving the Adversarial Robustness of Deep Neural Networks},
   volume={257},
   ISSN={2075-2180},
   url={http://dx.doi.org/10.4204/EPTCS.257.3},
   DOI={10.4204/eptcs.257.3},
   journal={Electronic Proceedings in Theoretical Computer Science},
   publisher={Open Publishing Association},
   author={Katz, Guy and Barrett, Clark and Dill, David L. and Julian, Kyle and Kochenderfer, Mykel J.},
   year={2017},
   month=sep, pages={19–26} }

@inproceedings{katz2017reluplex,
  title={Reluplex: An efficient {SMT} solver for verifying deep neural networks},
  author={Katz, Guy and Barrett, Clark and Dill, David L and Julian, Kyle and Kochenderfer, Mykel J},
  booktitle={International Conference on Computer Aided Verification},
  pages={97--117},
  year={2017},
  organization={Springer}
}

@article{tramer2020adaptive,
  title={On adaptive attacks to adversarial example defenses},
  author={Tramer, Florian and Carlini, Nicholas and Brendel, Wieland and Madry, Aleksander},
  journal={Advances in Neural Information Processing Systems (NeurIPS)},
  volume={33},
  pages={1633--1645},
  year={2020}
}

@inproceedings{chen2017zoo,
  title={{ZOO}: Zeroth order optimization based black-box attacks to deep neural networks without training substitute models},
  author={Chen, Pin-Yu and Zhang, Huan and Sharma, Yash and Yi, Jinfeng and Hsieh, Cho-Jui},
  booktitle={ACM Workshop on Artificial Intelligence and Security},
  pages={15--26},
  year={2017}
}

@inproceedings{szegedy2014intriguing,
  title={Intriguing Properties of Neural Networks},
  author={Szegedy, Christian and Zaremba, Wojciech and Sutskever, Ilya and Bruna, Joan and Erhan, Dumitru and Goodfellow, Ian and Fergus, Rob},
  booktitle={International Conference on Learning Representations (ICLR)},
  year={2014}
}

@misc{goodfellow2015explaining,
      title={Explaining and Harnessing Adversarial Examples}, 
      author={Ian J. Goodfellow and Jonathon Shlens and Christian Szegedy},
      year={2015},
      eprint={1412.6572},
      archivePrefix={arXiv},
      primaryClass={stat.ML},
      url={https://arxiv.org/abs/1412.6572}, 
}

@inproceedings{ilyas2018black,
  title={Black-box adversarial attacks with limited queries and information},
  author={Ilyas, Andrew and Engstrom, Logan and Athalye, Anish and Lin, Jessy},
  booktitle={International Conference on Machine Learning (ICML)},
  pages={2137--2146},
  year={2018},
  organization={PMLR}
}

@inproceedings{bhagoji2018practical,
  title={Practical black-box attacks on deep neural networks using efficient query mechanisms},
  author={Bhagoji, Arjun Nitin and He, Warren and Li, Bo and Song, Dawn},
  booktitle={European Conference on Computer Vision (ECCV)},
  pages={154--169},
  year={2018}
}

@inproceedings{ilyas2019prior,
  title={Prior Convictions: Black-box Adversarial Attacks with Bandits and Priors},
  author={Ilyas, Andrew and Engstrom, Logan and Madry, Aleksander},
  booktitle={International Conference on Learning Representations (ICLR)},
  year={2019}
}

@inproceedings{al2020sign,
  title={Sign bits are all you need for black-box attacks},
  author={Al-Dujaili, Abdullah and O'Reilly, Una-May},
  booktitle={International Conference on Learning Representations (ICLR)},
  year={2020}
}

@inproceedings{alzantot2019genattack,
  title={Genattack: Practical black-box attacks with gradient-free optimization},
  author={Alzantot, Moustafa and Sharma, Yash and Chakraborty, Supriyo and Zhang, Huan and Hsieh, Cho-Jui and Srivastava, Mani B},
  booktitle={Genetic and Evolutionary Computation Conference},
  pages={1111--1119},
  year={2019}
}

@inproceedings{carlini2017towards,
  title={Towards evaluating the robustness of neural networks},
  author={Carlini, Nicholas and Wagner, David},
  booktitle={IEEE Symposium on Security and Privacy (SP)},
  pages={39--57},
  year={2017},
  organization={Ieee}
}

@inproceedings{papernot2017practical,
  title={Practical black-box attacks against machine learning},
  author={Papernot, Nicolas and McDaniel, Patrick and Goodfellow, Ian and Jha, Somesh and Celik, Z Berkay and Swami, Ananthram},
  booktitle={ACM Asia Conference on Computer and Communications Security},
  pages={506--519},
  year={2017}
}

@inproceedings{athalye2018obfuscated,
  title={Obfuscated gradients give a false sense of security: Circumventing defenses to adversarial examples},
  author={Athalye, Anish and Carlini, Nicholas and Wagner, David},
  booktitle={International Conference on Machine Learning (ICML)},
  pages={274--283},
  year={2018}
}

@article{yao2021automated,
  title={Automated discovery of adaptive attacks on adversarial defenses},
  author={Yao, Chengyuan and Bielik, Pavol and Tsankov, Petar and Vechev, Martin},
  journal={Advances in Neural Information Processing Systems (NeurIPS)},
  volume={34},
  pages={26858--26870},
  year={2021}
}

@inproceedings{madry2018towards,
  title={Towards Deep Learning Models Resistant to Adversarial Attacks},
  author={Madry, Aleksander and Makelov, Aleksandar and Schmidt, Ludwig and Tsipras, Dimitris and Vladu, Adrian},
  booktitle={International Conference on Learning Representations (ICLR)},
  year={2018}
}

@inproceedings{tramer2018ensemble,
  title={Ensemble Adversarial Training: Attacks and Defenses},
  author={Tramer, Florian and Kurakin, Alexey and Papernot, Nicolas and Goodfellow, Ian and Boneh, Dan and McDaniel, Patrick},
  booktitle={International Conference on Learning Representations (ICLR)},
  year={2018}
}

@inproceedings{xie2019improving,
  title={Improving transferability of adversarial examples with input diversity},
  author={Xie, Cihang and Zhang, Zhishuai and Zhou, Yuyin and Bai, Song and Wang, Jianyu and Ren, Zhou and Yuille, Alan L},
  booktitle={IEEE/CVF Conference on Computer Vision and Pattern Recognition (CVPR)},
  pages={2730--2739},
  year={2019}
}

@inproceedings{byun2022effectiveness,
  title={On the effectiveness of small input noise for defending against query-based black-box attacks},
  author={Byun, Junyoung and Go, Hyojun and Kim, Changick},
  booktitle={IEEE/CVF Winter Conference on Applications of Computer Vision},
  pages={3051--3060},
  year={2022}
}

@inproceedings{chen2020stateful,
  title={Stateful detection of black-box adversarial attacks},
  author={Chen, Steven and Carlini, Nicholas and Wagner, David},
  booktitle={ACM Workshop on Security and Privacy on Artificial Intelligence},
  pages={30--39},
  year={2020}
}

@inproceedings{guo2019simple,
  title={Simple black-box adversarial attacks},
  author={Guo, Chuan and Gardner, Jacob and You, Yurong and Wilson, Andrew Gordon and Weinberger, Kilian},
  booktitle={International Conference on Machine Learning (ICML)},
  pages={2484--2493},
  year={2019},
  organization={PMLR}
}

@inproceedings{xie2019feature,
  title={Feature Denoising for Improving Adversarial Robustness},
  author={Xie, Cihang and Wu, Yuxin and van der Maaten, Laurens and Yuille, Alan and He, Kaiming},
  booktitle={IEEE/CVF Conference on Computer Vision and Pattern Recognition (CVPR)},
  year={2019},
  pages={501--509}
}

@inproceedings{qin2021random,
  title={Random Weight Noise for Improved Robustness: A Case Study in Visual Classification},
  author={Qin, Chengyu and Wang, Xiaoyu and Zhang, Xinyang and Yu, Huan and Chen, Pin-Yu},
  booktitle={IEEE International Conference on Computer Vision (ICCV)},
  year={2021},
  pages={6414--6424}
}

@inproceedings{cohen2019certified,
  title={Certified Adversarial Robustness via Randomized Smoothing},
  author={Cohen, Jeremy M and Rosenfeld, Elan and Kolter, Zico},
  booktitle={International Conference on Machine Learning (ICML)},
  year={2019},
  pages={1310--1320}
}

@inproceedings{shafahi2019adversarial,
  title={Adversarial Training for Free!},
  author={Shafahi, Ali and Najibi, Mahyar and Ghiasi, Amin and Xu, Zheng and Dickerson, John and Studer, Christoph and Davis, Larry S and Goldstein, Tom},
  booktitle={Advances in Neural Information Processing Systems (NeurIPS)},
  year={2019}
}

@inproceedings{schmidt2018adversarially,
  title={Adversarially Robust Generalization Requires More Data},
  author={Schmidt, Ludwig and Santurkar, Shibani and Tsipras, Dimitris and Talwar, Kunal and Madry, Aleksander},
  booktitle={Advances in Neural Information Processing Systems (NeurIPS)},
  year={2018}
}

@inproceedings{li2024query,
  title={Query Provenance Analysis: Efficient and Robust Defense against Query-based Black-box Attacks},
  author={Li, Shaofei and Zhang, Ziqi and Jia, Haomin and Guo, Yao and Chen, Xiangqun and Li, Ding},
  booktitle={IEEE Symposium on Security and Privacy (SP)},
  pages={1641--1656},
  year={2025},
  organization={IEEE}
}

@misc{park2025mind,
      title={Mind the Gap: Detecting Black-box Adversarial Attacks in the Making through Query Update Analysis}, 
      author={Jeonghwan Park and Niall McLaughlin and Ihsen Alouani},
      year={2025},
      eprint={2503.02986},
      archivePrefix={arXiv},
      primaryClass={cs.CR},
      url={https://arxiv.org/abs/2503.02986}, 
}

@article{zheng2025blackboxbench,
  title={Blackboxbench: A comprehensive benchmark of black-box adversarial attacks},
  author={Zheng, Meixi and Yan, Xuanchen and Zhu, Zihao and Chen, Hongrui and Wu, Baoyuan},
  journal={IEEE Transactions on Pattern Analysis and Machine Intelligence},
  year={2025},
  publisher={IEEE}
}

@inproceedings{li2020towards,
  title={Towards transferable targeted attack},
  author={Li, Maosen and Deng, Cheng and Li, Tengjiao and Yan, Junchi and Gao, Xinbo and Huang, Heng},
  booktitle={IEEE/CVF Conference on Computer Vision and Pattern Recognition (CVPR)},
  pages={641--649},
  year={2020}
}

@inproceedings{cheng2019query,
  title={Query-efficient hard-label black-box attack: An optimization-based approach},
  author={Cheng, Minhao and Zhang, Huan and Hsieh, Cho Jui and Le, Thong and Chen, Pin Yu and Yi, Jinfeng},
  booktitle={International Conference on Learning Representations (ICLR)},
  year={2019}
}

@inproceedings{chen2020hopskipjumpattack,
  title={Hopskipjumpattack: A query-efficient decision-based attack},
  author={Chen, Jianbo and Jordan, Michael I and Wainwright, Martin J},
  booktitle={IEEE Symposium on Security and Privacy (SP)},
  pages={1277--1294},
  year={2020},
  organization={IEEE}
}

\clearpage
\appendix
\thispagestyle{empty}

% Supplementary material: To improve readability, you must use a single-column format for the supplementary material.
\onecolumn
\aistatstitle{
Appendix}

% Note: You can choose whether the include you appendices as part of the main submission file (here) OR submit them separately as part of the supplementary material. It is the authors' responsibility that any supplementary material does not conflict in content with the main paper (e.g., the separately uploaded additional material is not an updated version of the one appended to the manuscript).

\section{Motivation for Randomized DLD}
\label{apd:rand-DLD}

To facilitate the theoretical analysis of DLD, we rely on a key property: for any input, the output of $\mathcal{D}_{\text{post}}$ equals $\mathcal{L}_{\text{high}}$ with probability $p$ and $\mathcal{L}_{\text{low}}$ with probability $1 - p$. This approximately holds for deterministic DLD if we set $|S| = p$ (see Table~\ref{tab:Sandp}). It does not hold exactly, unless we can guarantee or reasonably assume that the margin loss (input to $\mathcal{D}_{\text{post}}$) is uniformly distributed. Unfortunately, this assumption is not realistic.
Even with an idealized model input distribution, the margin loss distribution is shaped by complex model behavior, which remains poorly understood due to limited progress in deep model interpretability.

\begin{table}[h]
  \centering
  \caption{Proportion of \(\mathcal{D}_{\text{post}} = \mathcal{L}_{\text{high}}\) after applying \((6, 0.3, S)\)-DLD to MaxViT-T predictions on the ImageNet validation set. \(S\) is a union of small, evenly spaced intervals in \((0, 1)\).}
  \label{tab:Sandp}
  \begin{tabular}{c|c}
    \toprule
    \(|S|\) & Proportion of \(\mathcal{D}_{\text{post}} = \mathcal{L}_{\text{high}}\) (\%) \\
    \midrule
    0.1 & 9.906 \\
    0.2 & 19.988 \\
    0.3 & 29.740 \\
    0.4 & 39.864 \\
    0.5 & 49.930 \\
    0.6 & 59.798 \\
    0.7 & 69.908 \\
    0.8 & 80.036 \\
    0.9 & 90.086 \\
    \bottomrule
  \end{tabular}
\end{table}

To formalize this probabilistic behavior, we introduce randomized version of DLD (Definition~\ref{def:rand}), which explicitly enforces the desired probabilities $p$ and $1-p$. Since both versions behave similarly in practice, we argue that theoretical results for randomized DLD can meaningfully approximate those for the deterministic version.

One might ask: why not adopt randomized DLD directly? The answer lies in a critical vulnerability—randomized DLD is susceptible to adaptive attacks. If the attacker knows the defense, they can repeatedly query the same input until two distinct outputs appear. The larger loss must correspond to $\mathcal{L}_{\text{high}}$, which is strictly increasing, and can then be minimized to generate an adversarial example.

The cost of this tactic is not prohibitive. Let $X$ be the number of queries needed to observe two different outputs. This is a variant of the non-uniform coupon collector problem, and we can show $\mathbb{E}[X] = 1 + 1/(p(1-p))$. When $p = 0.5$, the expected number of queries is only $5$. That is, the attacker needs only about $5\times$ more queries to reliably succeed, avoiding the traps introduced by DLD. Using more extreme values of $p$ can increase this cost, but introduces new trade-offs (see Section~\ref{sec:hyper-param}) and still doesn’t eliminate the attack vector.

Although one might mitigate this vulnerability by tricks like varying $p$ across regions, allowing inconsistent outputs for the same input is generally undesirable. Therefore, we adopt the deterministic version of DLD as the default method in this paper.

\section{Proofs}
\label{apd:proof}
\subsection{Proof for Theorem \ref{thm:smallprob}}

\begin{proof}

For convenience, we assume that the generator $G$ is sufficiently strong such that for any input $\mathbf{x}$, we have $\mathcal{L}_{\text{real}}(G(\mathbf{x})) \le \mathcal{L}_{\text{real}}(\mathbf{x})$. If this assumption does not hold, the attack would be even less effective, making the theorem stronger.

In ecah iteration, the standard-SQA$(\mathbf{x}_0,G)$ accepts or rejects the generated example. 
Let $\mathbf{x}_1, \mathbf{x}_2, \dots, \mathbf{x}_m$ denote the sequence of inputs accepted during the attack (i.e., those that lead to observed loss values no greater than their predecessors).

By Assumptions~\ref{ass:robust} and~\ref{ass:iteration}, we have, for all $\mathbf{x}$ and any class $y$,
\[
\vert f_y(\mathbf{x}) - f_y(G(\mathbf{x})) \vert < \frac{\epsilon}{2}.
\]
This implies that the difference in real loss between consecutive accepted examples is bounded:
\[
\vert \mathcal{L}_{\text{real}}(\mathbf{x}) - \mathcal{L}_{\text{real}}(G(\mathbf{x})) \vert < \epsilon.
\]
Therefore, for all $j > 0$, we have
\[
\vert \mathcal{L}_{\text{real}}(\mathbf{x}_{j}) - \mathcal{L}_{\text{real}}(\mathbf{x}_{j-1}) \vert < \epsilon.
\]

If the standard-SQA ever accepts an $\mathbf{x}_{j}$ that $\mathcal{L}_{\text{real}}(\mathbf{x}_{j}) \ge \epsilon$ and $\mathcal{L}_{\text{observed}}(\mathbf{x}_{j}) = \mathcal{D}_{\text{post}}(\mathcal{L}_{\text{real}}(\mathbf{x}_{j})) = \mathcal{L}_{\text{low}}(\mathcal{L}_{\text{real}}(\mathbf{x}_{j}))$, then because of the property of $\mathcal{D}_{\text{post}}$, any $G(\mathbf{x}_{j})$ will meet $\mathcal{L}_{\text{observed}}(G(\mathbf{x}_{j})) > \mathcal{L}_{\text{observed}}(\mathbf{x}_{j})$. Thus, future iterations of the standard-SQA will not accept any examples, making $m=j$. The standard-SQA fails.

The standard-SQA will succeed if an example $\mathbf{x}_i$ is accepted such that $\mathcal{L}_{\text{real}}(\mathbf{x}_i) < \epsilon$ because the query budget is infinite. 
The standard-SQA will succeed only if an example $\mathbf{x}_i$ is accepted such that $\mathcal{L}_{\text{real}}(\mathbf{x}_i) < \epsilon$ because $\forall j >0$, $\vert \mathcal{L}_{\text{real}}(\mathbf{x}_{j}) - \mathcal{L}_{\text{real}}(\mathbf{x}_{j-1}) \vert < \epsilon $.
If such an $\mathbf{x}_i$ is accepted, then for all $j\in \{1,2,...,i-1\}$, $\mathcal{L}_{\text{observed}}(\mathbf{x}_j) = \mathcal{L}_{\text{high}}(\mathcal{L}_{\text{real}}(\mathbf{x}_j))$. 

Note $\forall j >0$, $\vert \mathcal{L}_{\text{real}}(\mathbf{x}_{j}) - \mathcal{L}_{\text{real}}(\mathbf{x}_{j-1}) \vert < \epsilon $, due to assumption \ref{ass:x0}, it will take at least $\lfloor \frac{\tau-\epsilon}{\epsilon} \rfloor +1$ iterations for standard-SQA to obtain $\mathbf{x}_i$. Thus, $i-1 \ge \lfloor \frac{\tau-\epsilon}{\epsilon} \rfloor$. The probability of success
\begin{equation}\notag
\begin{aligned}
& \mathrm{P}\left( \bigcap_{j\in \{1,2,...,i-1\}}\left( \mathcal{L}_{\text{observed}}\left( \mathbf{x}_j \right)= \mathcal{L}_{\text{high}}\left( \mathcal{L}_{\text{real}}\left( \mathbf{x}_j \right) \right) \right) \right) \\
&= \prod_{j=1}^{i-1} \mathrm{P}\left(\mathcal{L}_{\text{observed}}(\mathbf{x}_j) = \mathcal{L}_{\text{high}}(\mathcal{L}_{\text{real}}(\mathbf{x}_j))\right)\\ 
&\le \prod_{j=1}^{\lfloor \frac{\tau-\epsilon}{\epsilon} \rfloor}\mathrm{P}(\mathcal{L}_{\text{observed}}(\mathbf{x}_j) = \mathcal{L}_{\text{high}}(\mathcal{L}_{\text{real}}(\mathbf{x}_j)))
\\
&=\prod_{j=1}^{\lfloor \frac{\tau-\epsilon}{\epsilon} \rfloor} p \\
&= p^{\left\lfloor \frac{\tau - \epsilon}{\epsilon} \right\rfloor}.
\end{aligned}
\end{equation}
\end{proof}

\subsection{Proof for Theorem \ref{thm:reverse}}

\begin{proof}

For convenience, we assume that the generator $G$ is sufficiently strong such that for any input $\mathbf{x}$, we have $\mathcal{L}_{\text{real}}(G(\mathbf{x})) \le \mathcal{L}_{\text{real}}(\mathbf{x})$. If this assumption does not hold, the attack would be even less effective, making the theorem stronger.
Without loss of generality, assume that in every iteration the algorithm always accepts the newly generated example. Arrange the examples accepted by the algorithm in order of occurrence, represented as sequence $\mathbf{x}_1,\mathbf{x}_2,...,\mathbf{x}_m$.

Denote $\mathcal{L}_{\text{observed}}(\mathbf{x}_i) = \mathcal{L}_{\text{high}}(\mathcal{L}_{\text{real}}(\mathbf{x}_i))$ as event $U_i$. According to Definition \ref{def:rand}, for $i\ge 1$, $\mathrm{P}(U_i) = p$, and $\mathrm{P}\left(\overline{U_i}\right) = 1-p$.
For $i\ge 2$, the optimization direction must be reversed in the $i$-th iteration when: 1) $U_i \land \overline{U_{i-1}}$ or 2) $\overline{U_i} \land U_{i-1}$.

The times of conducting direction reverse can be calculated using an indicator
\begin{equation*}
X_i = \left\{
\begin{aligned}
& 1 ,\   \left(U_i \land \overline{U_{i-1}}\right) \lor \left(\overline{U_i} \land U_{i-1} \right)   \\
& 0 , \  otherwise \\
\end{aligned}
\right. 
,i\ge 2
\end{equation*}

We have
\begin{equation*}
\begin{aligned}
\mathrm{E}(X_i) &= \mathrm{E}(X_i | U_i)\mathrm{P}(U_i) + \mathrm{E}\left(X_i | \overline{U_i}\right)\mathrm{P}\left(\overline{U_i}\right)\\
&=\mathrm{P}(X_i=1 | U_i)\mathrm{P}(U_i) + \mathrm{P}\left(X_i=1 | \overline{U_i}\right)\mathrm{P}\left(\overline{U_i}\right)\\
&=\mathrm{P}\left(U_i \land \overline{U_{i-1}}\right) + \mathrm{P}\left(\overline{U_i} \land U_{i-1}\right)\\
&=2p(1-p)
\end{aligned}
\end{equation*}

Let $N=\lfloor \frac{\tau-\epsilon}{\epsilon} \rfloor$. According to the proof of Theorem \ref{thm:smallprob}, it will take at least $N+1$ iterations for the algorithm to succeed. The expectation of total times of conducting direction reverse in $N$ iterations
\begin{equation*}
\begin{aligned}
\mathrm{E}\left(\sum_{i=1}^N X_i\right) 
\ge \mathrm{E}\left(\sum_{i=2}^N X_i\right)
=\sum_{i=2}^N \mathrm{E}(X_i)
=2(N-1)p(1-p)
=2\lfloor \frac{\tau-2\epsilon}{\epsilon} \rfloor  p  (1-p).
\end{aligned}
\end{equation*}

\end{proof}

\section{Parameter Settings for Adaptive Tactics}
\label{apd:parameterForAdaptive}
To examine the potential vulnerabilities of runtime defenses and stress-test our proposed DLD method, we construct several adaptive variants based on Square Attack.

The \textbf{standard} variant follows the original update rule used by Square Attack and similar methods, where each iteration accepts the candidate with a lower loss value. This corresponds to Algorithm~\ref{alg:standard}.

The \textbf{reverse} variant, corresponding to Algorithm~\ref{alg:reverse}, reverses the update direction after the optimization stagnates. In our experiments, the stagnation threshold $t$ is set to 23. This value was empirically chosen to give the attacker the greatest advantage and the defender the greatest challenge.

The \textbf{explore} variant corresponds to Algorithm~\ref{alg:rand}, in which each iteration accepts a worse candidate with fixed probability. We set $Prob = 0.5$, empirically selected to be the most favorable to the attacker and the most difficult for the defense.

The \textbf{SA (simulated annealing)} variant also follows Algorithm~\ref{alg:rand}, but uses a dynamic acceptance probability based on a simulated annealing schedule. The initial temperature is set to 25 and decays by a factor of 0.997 after each iteration. If no new candidate is accepted for 20 consecutive steps, the temperature is reset to 25. The acceptance probability is given by:
\[
Prob = \exp\left( -\frac{ \mathcal{L}_{\text{new}} - \mathcal{L} }{ \texttt{tmp} } \right)
\]
where $\mathcal{L}_{\text{new}}$ is the loss value of the worse candidate, $\mathcal{L}$ is the loss of the current best candidate, and \texttt{tmp} is the current temperature. These settings were also empirically chosen to favor the attacker and stress the defense.

All experiments were run on NVIDIA RTX 3090 GPUs, and other CUDA devices are expected to work as well. For methods involving randomness, we use the default seed 19260817 (a prime number) and run each experiment once.

\section{ASR Trend during Attacks}
\label{sec:asr}
As a supplement to Table~\ref{table:big}, Figure~\ref{fig:asr} shows how the attack success rate (ASR)—that is, the proportion of initially clean samples that have been successfully turned into AEs—evolves over attack iterations for different defenses. 
For the WideResNet-50 model, we plot the worst-case scenario for each defense: reverse tactic for AAA, standard tactic for RND ($\nu = 0.01$) and for the undefended model, and SA tactic for DLD. 
Note that the vertical axis represents the attacker’s success rate, where lower values indicate stronger defense performance.

\begin{figure}
  \centering
  \includegraphics[width=0.5 \textwidth]{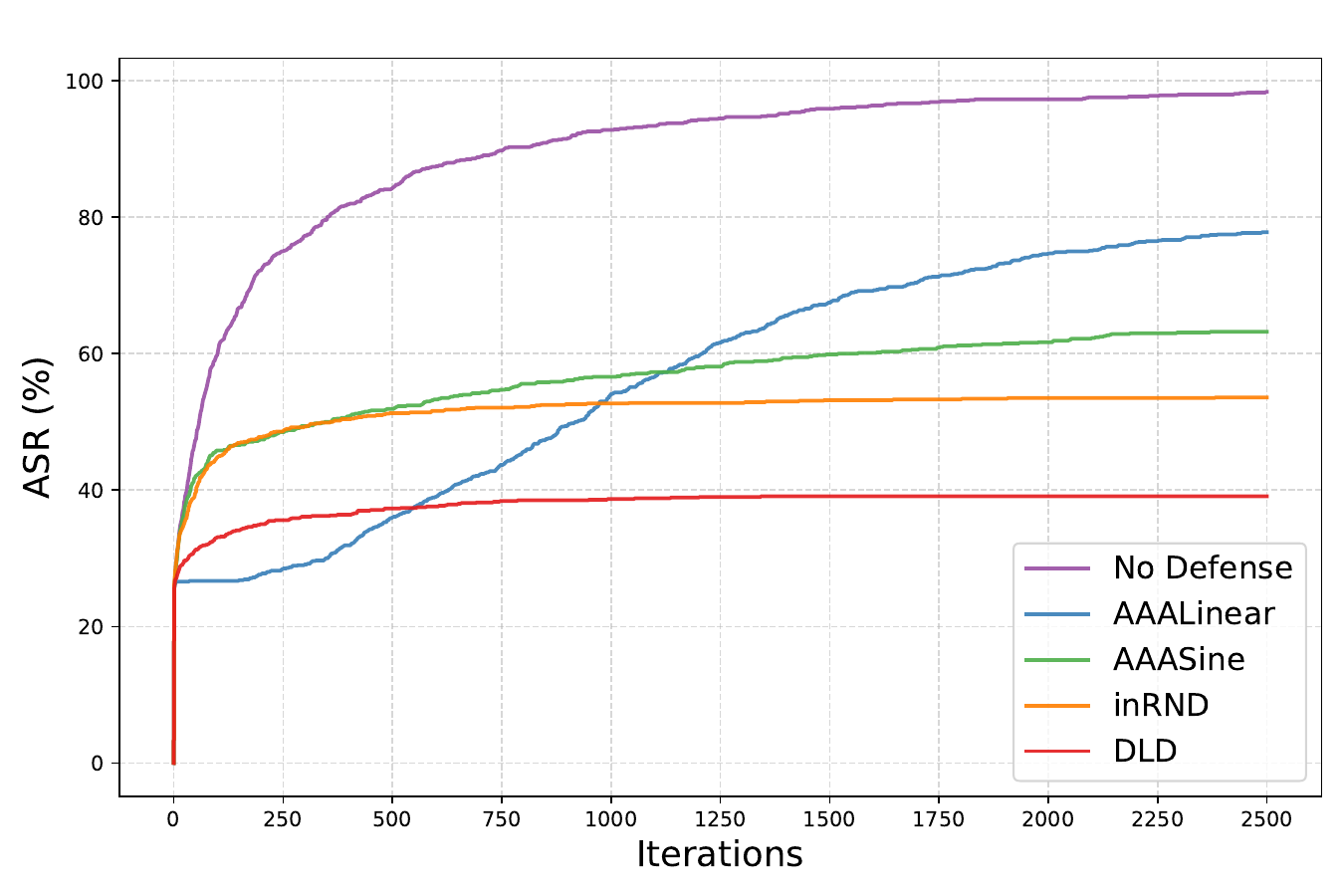}
  \caption{Attack success rate (ASR) over attack iterations.}
  \label{fig:asr}
\end{figure}

\begin{table*}[t]
  \renewcommand{\arraystretch}{0.9}
  \caption{The test accuracy under attacks and defenses (\%)}
  \label{table:l2}
  \centering
  \begin{tabular}{c c c c c c c c c c}
\toprule
\multirow{3}{*}{Model} & \multicolumn{2}{c}{Attack} & \multicolumn{7}{c}{Defense} \\ \cmidrule(lr){2-3} \cmidrule(lr){4-10}
 & \multirow{2}{*}{generator $G$} & \multirow{2}{*}{tactic} & \multirow{2}{*}{None} & \multicolumn{2}{c}{RND} & \multicolumn{2}{c}{AAA} & \multicolumn{2}{c}{DLD} \\ 
 &  &  &  & 0.01 & 0.02 & sine & linear & rand & determine \\ \midrule

\multirow{5}{*}{WideResNet} 
 & \multicolumn{2}{c}{~ ~ ~ ~None} & 100 & 97.4 & 95.8 & 100 & 100 & 100 & 100 \\ 
 & \multirow{4}{*}{Square Attack} & standard & 8.1 & \underline{46.8} & \underline{57.1} & 70.7 & 87.4 & 84.6 & 85.9 \\ 
 &  & explore & 50.6 & 59.5 & 62.1 & 69.3 & 81.8 & 72.2 & 73.2 \\ 
 &  & SA & 69.8 & 68 & 66.6 & 67.8 & 71.5 & \underline{70.0} & \underline{70.6} \\ 
 &  & reverse & 16.2 & 60.5 & 64.8 & \underline{50.6} & \underline{26.9} & 85.6 & 85.9 \\ \midrule

\multirow{5}{*}{RegNet-Y} 
 & \multicolumn{2}{c}{~ ~ ~ ~None} & 100 & 97.7 & 95.7 & 100 & 100 & 100 & 100 \\ 
 & \multirow{4}{*}{Square Attack} & standard & 8.7 & \underline{54.6} & \underline{56.0} & 70.7 & 90.1 & 85.9 & 87.6 \\ 
 &  & explore & 54.9 & 64.4 & 61.8 & 71.6 & 84.2 & 79.1 & 77.8 \\ 
 &  & SA & 74.0 & 69.2 & 65.9 & 74.4 & 76.2 & \underline{76.1} & \underline{75.1} \\ 
 &  & reverse & 15.5 & 64.2 & 64.5 & \underline{51.9} & \underline{27.3} & 87.0 & 87.7 \\ \midrule

 \multirow{5}{*}{MaxViT-T} 
 & \multicolumn{2}{c}{~ ~ ~ ~None} & 100 & 98.6 & 97.5 & 100 & 100 & 100 & 100 \\ 
 & \multirow{4}{*}{Square Attack} & standard & 18.5 & \underline{60.9} & \underline{65.5} & 80.6 & 92.2 & 89.2 & 90.5 \\ 
 &  & explore & 70.0 & 73.7 & 71.6 & 79.9 & 88.5 & 83.6 & 83.3 \\ 
 &  & SA & 81.6 & 79.3 & 74.7 & 80.2 & 83.7 & \underline{81.7} & \underline{81.5} \\ 
 &  & reverse & 37.3 & 71.6 & 72.3 & \underline{65.3} & \underline{43.9} & 90.7 & 90.4 \\ \bottomrule
 
  \end{tabular}
\end{table*}

\section{Numerical Results with $\ell_2$ Noise Budget}
\label{sec:l2}
In the main text, we reported experimental results using an $\ell_\infty$ noise budget. Here, we present results evaluated under an $\ell_2$ noise budget.
We use 1,000 correctly classified ImageNet samples, randomly selecting one image from each class. The experiments are conducted with the PyTorch-pretrained WideResNet-50, RegNet-Y 1.6GF, and MaxViT-T models. AEs are generated using Square Attack with an $\ell_2$ noise budget of $\epsilon_\text{n} = 10$ and a query budget of $n = 1500$.
The under-attack accuracy of different defenses is summarized in Table~\ref{table:l2}. For each defense, the lowest accuracy across all attack tactics is underlined to indicate the worst-case performance under adaptive attacks.
Similar to the observations in Section~\ref{sec:results}, DLD achieves the best worst-case performance among all evaluated defenses.

\newpage

\end{document}